\newtheorem{theorem}{Theorem}
\newtheorem{lemma}{Lemma}
\newtheorem{remark}{Remark}
\newtheorem{definition}{Definition}
\newtheorem{proposition}{Proposition}
\DeclareMathOperator*{\esssup}{\ensuremath{\text{\rm ess\,sup}}}
\DeclareMathOperator*{\argmax}{\ensuremath{\text{\rm arg\,max}}}
\DeclareMathOperator*{\range}{\ensuremath{\text{\rm Im}}}
\DeclareMathOperator{\tr}{\ensuremath{\text{\rm tr}}}
\DeclareMathOperator*{\rank}{\ensuremath{\text{\rm rank}}}
\DeclareMathOperator*{\Span}{\ensuremath{\text{\rm span}}}
\providecommand{\norm}[1]{\big\lVert#1\big\Vert}
\providecommand{\abs}[1]{\lvert#1\rvert}
\newcommand{\scalarp}[1]{{\left\langle #1\right\rangle}}
\newcommand{\R}{\mathbb R}
\newcommand{\C}{\mathbb C}
\newcommand{\N}{\mathbb N}
\newcommand{\EE}{\ensuremath{\mathbb E}}
\newcommand{\PP}{\ensuremath{\mathbb P}}
\newcommand{\Id}{I}
\newcommand{\sigalg}{\Sigma_{\spX}}
\newcommand{\rvX}{X}
\newcommand{\rvY}{X'}
\newcommand{\spX}{\mathcal{X}}
\newcommand{\spY}{\mathcal{X}}
\newcommand{\LiiX}{L^2_\mX(\spX)}
\newcommand{\LiiY}{L^2_{\mY}(\spY)}
\newcommand{\mX}{\mu} %
\newcommand{\mY}{\mu'} %
\newcommand{\mXt}[1]{\mu_{#1}}
\newcommand{\mXY}{\rho} %
\newcommand{\im}{\pi} %
\newcommand{\jim}{\rho} %
\newcommand{\Lii}{L^2_{\im}(\spX)}
\newcommand{\Wii}{W^{1,2}_\im(\spX)}
\newcommand{\TO}{\mathcal{T}}  %
\newcommand{\LO}{\mathcal{L}}  %
\newcommand{\ELO}{\widehat{\mathcal{L}}}  %
\newcommand{\ETO}{\widehat{\mathcal{T}}}  %
\newcommand{\param}{w}
\newcommand{\spParam}{\mathcal{W}}
\newcommand{\embXp}[1]{\psi_{\param#1}}
\newcommand{\embYp}[1]{\psi_{\param#1}'}
\newcommand{\embMXp}{\widehat{\Psi}_w}
\newcommand{\embMYp}{\widehat{\Psi}'_w}
\newcommand{\embMXpT}{\widehat{\Psi}^\top_w}
\newcommand{\embMYpT}{\widehat{\Psi}^{\prime\top}_w}
\newcommand{\spH}{\mathcal{H}}
\newcommand{\kHX}{k_{\rvX}}
\newcommand{\spHX}{\mathcal{H}}
\newcommand{\fHX}{\psi}
\newcommand{\kHY}{k_{\rvY}}
\newcommand{\spHY}{\mathcal{H}'}
\newcommand{\fHY}{\psi'}
\newcommand{\kHXp}{k_{\rvX}^{\param}}
\newcommand{\spHXp}{\mathcal{H}_{\param}}
\newcommand{\fHXp}{\phi_{\rvX}^{\param}}
\newcommand{\kHYp}{k_{\rvY}^{\param}}
\newcommand{\spHYp}{\mathcal{H}_{\param}'}
\newcommand{\fHYp}{\phi_{\rvY}^{\param}}
\newcommand{\Enc}[1]{E_{#1}}
\newcommand{\Dec}[1]{D_{#1}}
\newcommand{\bcon}{c_\psi}
\newcommand{\bconbis}{c_{\psi'}}
\newcommand{\proj}[1]{P_{#1}}
\newcommand{\Cx}{C_{\rvX}}
\newcommand{\Cy}{C_{\rvY}}
\newcommand{\Cxy}{C_{\rvX\!\rvY}}
\newcommand{\Cxp}{{C}^{\param}_{\rvX}}
\newcommand{\Cyp}{{C}^{\param}_{\rvY}}
\newcommand{\Cxyp}{{C}^{\,\param}_{\rvX\!\rvY}}
\newcommand{\Cxyd}{{C}^{\,\param}_{X\!\partial}}
\newcommand{\EEstim}{\widehat{T}}  %
\newcommand{\Data}{\mathcal{D}}
\newcommand{\ECxp}{\widehat{C}^{\param}_{\rvX}}
\newcommand{\ECyp}{\widehat{C}^{\param}_{\rvY}}
\newcommand{\ECxyp}{\widehat{C}^{\,\param}_{\rvX\!\rvY}}
\newcommand{\ECxyd}{\widehat{C}^{\,\param}_{X\!\partial}}
\newcommand{\TS}{S_{\im}}
\newcommand{\TSX}{S_{\mX}}  %
\newcommand{\TSY}{S_{\mY}}  %
\newcommand{\ESX}{\widehat{S}}  %
\newcommand{\ESY}{\widehat{S}'}  %
\newcommand{\spF}{\mathcal{F}} %
\newcommand{\Reg}{\mathcal{R}} %
\newcommand{\hnorm}[1]{\norm{#1}_{\rm{HS}}}
\newcommand{\reg}{\gamma}
\newcommand{\rate}{\varepsilon}
\newcommand{\score}{\mathcal{P}}
\newcommand{\rscore}{\mathcal{S}}
\newcommand{\escore}{\widehat{\mathcal{P}}}
\newcommand{\erscore}{\widehat{\mathcal{S}}}
\newcommand{\erefun}{\widehat{f}}
\newcommand{\elefun}{\widehat{g}}
\newcommand{\sval}{\sigma}
\newcommand{\eval}{\lambda}
\newcommand{\eeval}{\widehat{\lambda}}
\newcommand{\erevec}{\widehat{v}}
\newcommand{\elevec}{\widehat{u}}
\algrenewcommand\algorithmicrequire{\textbf{Input:}}
\algrenewcommand\algorithmicensure{\textbf{Output:}}
\newcommand{\riccardo}[2][]{\todo[color=violet!20,#1]{{\bf RG:} #2}}
\title{Learning invariant representations of time- \\homogeneous stochastic dynamical systems}
\author{Vladimir R. Kostic\thanks{Equal contribution, corresponding authors.}\\
Istituto Italiano di Tecnologia\\
University of Novi Sad\\
\texttt{vladimir.kostic@iit.it}
\And
Pietro Novelli\footnotemark[1]\\
Istituto Italiano di Tecnologia\\
\texttt{pietro.novelli@iit.it}
\And
Riccardo Grazzi\\
Istituto Italiano di Tecnologia\\
University College London\\
\And
Karim Lounici\\
CMAP \'Ecole Polytechnique\\
\And
Massimiliano Pontil \\
Istituto Italiano di Tecnologia\\
University College London\\
}
\begin{document}

\maketitle

\vspace{-.15truecm}
\begin{abstract}
We consider the general class of time-homogeneous stochastic dynamical systems, both discrete and continuous, and study the problem of learning 
a representation of the state that faithfully captures its dynamics. This is instrumental to learning the transfer operator or the generator of the system, which in turn can be used for numerous tasks, such as forecasting and interpreting the system dynamics.
We show that the search for a good representation can be cast as an optimization problem over neural networks. Our approach is supported by recent results in statistical learning theory, highlighting the role of approximation error and metric distortion in the learning problem.
The objective function {we propose} is associated with projection operators {from the} representation space {to the} data space, overcomes metric distortion, and can be empirically estimated from data. 
In the discrete-time setting, we further derive a relaxed objective function that is differentiable and numerically well-conditioned. We compare our method against state-of-the-art approaches on different datasets, showing better performance across the board.
\end{abstract}

\section{Introduction}\label{sec:intro}
\vspace{-.1truecm}Dynamical systems are a mathematical framework describing the evolution of state variables over time. These models, often represented by nonlinear differential equations (ordinary or partial) and possibly stochastic, have broad  applications in science and engineering, ranging from
climate sciences \citep{cannarsa2020mathematical,fisher2009data}, to finance~\citep{Pascucci2011}, to atomistic simulations \citep{mardt2018vampnets, McCarty2017,Schutte2001}, and to open quantum system dynamics~\citep{Gorini1976,Lindblad1976}, among others.
\vspace{-.05truecm}
However, it is usually the case that no analytical models of the dynamics are available and one must resort to data-driven techniques to characterize a dynamical system.  
Two powerful paradigms have emerged: deep neural networks (DNN) and kernel methods.  
The latter are backed up by solid statistical guarantees~\citep{Steinwart2008}  %
{determining when} linearly parameterized models can be learned efficiently. 
{Yet,}
selecting an appropriate kernel function 
may be a hard task, requiring a significant amount of experimentation and expertise. 
In comparison, the former are very effective in learning complex data representations \citep{goodfellow2016deep}, and benefit from a solid ecosystem of software tools, making the learning process feasible on large-scale systems. However, their statistical analysis is still in its infancy, with only a few {proven} results on their generalization properties. 

\vspace{-.1truecm}
Kernel-based methods hinge on the powerful idea of characterizing dynamical systems by lifting their definition over a Hilbert space of functions %
and then studying the associated \emph{transfer operators}. They describe the average evolution of functions of the state  ({\it observables}) over time and for deterministic systems are also known as \emph{Koopman operators}.  %
Furthermore, transfer operators are {\em linear}, and under additional assumptions admit a spectral decomposition, 
which provides a valuable tool to interpret and analyze the behavior of non-linear systems %
\citep[see e.g.][and references therein]{Brunton2022,Kutz2016}. The usefulness of this approach critically relies on an appropriate choice of the \textit{observable space}. In particular, in order to fully benefit from the spectral decomposition, {it is crucial} to find an observable space $\spF$ which linearizes the dynamics and is {\it invariant} under the action of the transfer operator. In kernel-based methods, such a space is implicitly linked to the kernel function and gives a clear mathematical meaning to the problem of choosing a ``good'' kernel. Unfortunately, the analytical form of transfer operators is often intractable or unavailable, especially in complex or poorly understood systems, posing challenges in constructing invariant representations.
\begin{figure}[t!]
  \centering \includegraphics[width=0.85\textwidth]{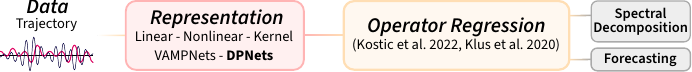}
    \caption{Pipeline for learning dynamical systems. 
    DPNets learn a data representation to be used with standard operator regression methods. 
    In turn, these are employed to solve downstream tasks such as forecasting and interpreting dynamical systems via spectral decomposition.}
    \label{fig:pipeline}
\vspace{-.67truecm}
\end{figure}

In this paper, we build synergistically upon both kernel and DNN paradigms:  
we first employ DNNs
to learn
an invariant representation that fully captures the system dynamics, and then forward this representation to kernel-based algorithms for the actual transfer operator regression task. 
This general framework is illustrated in Fig.~\ref{fig:pipeline}. Our method, named Deep Projection Networks (DPNets), addressing the challenge of providing good representations to the operator regression algorithms, can be cast as an optimization problem over neural networks and can benefit from a differentiable and numerically well-conditioned score functional, enhancing the stability of the training process.

\vspace{-.1truecm}
{\bf Previous work~} Extensive research has been conducted on learning dynamical systems from data. The monographs \citep{Brunton2022,Kutz2016} are standard references in this field. To learn transfer operators we mention the works~\citep{Alexander2020,Bouvrie2017,Das2020,Kawahara2016,Klus2019,Kostic2022,Williams2015b} presenting kernel-based algorithms, and~\citep{azencot2020forecasting, Bevanda2021,Fan2021,Lusch2018, Morton2018} based on deep learning schemes. 
Finding meaningful representations of the state of the system to be used in conjunction with transfer operator learning is a critical challenge, tackled by many authors. We mention the 
works~\citep{azencot2020forecasting, federici2024latent,Lusch2018,Morton2018,Otto2019} where a representation is learned via DNN schemes, as well as~\citep{Kawahara2016, Li2017, Mardt2019, mardt2018vampnets, Tian2021,Yeung2019,Wu2019} addressing the problem of learning invariant subspaces of the transfer 
operator.  
Mostly related to our methodology is 
\citep{andrew2013deep}, which introduced deep canonical correlation analysis and, especially, VAMPnets \citep{mardt2018vampnets, Wu2019}, which repurposed 
this approach to learn transfer operators. {Within our theoretical framework, indeed, we will show that VAMPNets can be recovered as a special case in the setting of discrete dynamics. More in general, operator learning with DNN approaches is reviewed e.g. in \cite{Kovachki2023}, with a specific focus on PDEs.} 

\vspace{-.1truecm}
{\bf Contributions~}   
Our contributions are:
{\bf 1)} Leveraging recent results in statistical learning theory, we formalize the problem of representation learning for dynamical systems and design a score function based on the orthogonal projections of the transfer operator in data spaces (Sec.~\ref{sec:prob}); 
{\bf 2)} 
We show how to reliably solve the corresponding optimization problem.
In the discrete case, our score
leads to optimal representation learning under the mild assumption of compactness of the transfer operator (Thm.~\ref{thm:main}). In the continuous case, our method applies to time reversal invariant dynamics (Thm.~\ref{thm:main_cont}) including the important case of Langevin dynamics;
{\bf 3)} We show how the learned representation can be used within the framework of operator regression in a variety of settings. Numerical experiments illustrate the versatility and competitive performance of our approach against several baselines. 
\vspace{-.1truecm}
{\bf Notation~} We let $\N$ be the set of natural numbers and $\N_0 \,{=}\, \{0\} \cup \N$. 
For $m\,{\in}\,\N$ we denote $[m]:=\{1,\ldots,m\}$. If $\TO$ is a compact linear operator on two Hilbert spaces we let $\sigma_i(\TO)$ be its $i$-th largest singular value, we let $(\TO)^{\dagger}$ be the Moore-Penrose pseudo-inverse, and $\TO^*$ the adjoint operator. We denote by $\|\cdot\|$ and  $\|\cdot\|_{\rm HS}$ the operator and Hilbert-Schmidt norm, respectively. 

\section{Representation learning for dynamical systems}\label{sec:prob}
\vspace{-.1truecm}
In this section, we give some background on transfer operators for dynamical systems, discuss key challenges in learning them from data, and propose our representation learning approach. 
\vspace{-.1truecm}

{\bf Background~} Let $(X_{t})_{t\in\mathbb{T}}$ be a stochastic process taking values in some state space $\spX$, where the time index $t$ can be either discrete ($\mathbb{T}=\N_0$) or continuous ($\mathbb{T}=[0,+\infty)$), and denote by $\mXt{t}$ the law of $X_t$.   
Let $\spF \subset \R^{\spX}$ be a prescribed space of real-valued functions, henceforth referred to as observables space. By letting $s,t \in \mathbb{T}$, with $s \leq t$, the \emph{forward transfer operator}  {$\TO_{s,t}\colon\spF\to\spF$ evolves  an observable $f: \spX \rightarrow \mathbb{R}$} from time $s \in \mathbb{T}$ to time $t$, by the conditional expectation
\begin{equation}\label{eq:transfer-operators}
[\TO_{s,t}( f )](x) 
:= \EE[ f(X_{t})\,\vert X_{s} = x],\;x\in\spX. 
\end{equation}
For a large class of stochastic dynamical systems, these \emph{linear} operators are {\em time-homogeneous}, that is they only depend on the time difference $t - s$. In this case $\TO_{s,t} = \TO_{0,t - s} =: \TO_{t - s}$. Further, we will use the shorthand notation $\TO := \TO_1$. 
Time-homogeneous transfer operators at different times are related through the Chapman-Kolmogorov equation~\citep{allen2007}, 
$\TO_{s + t} = \TO_{t}\,\TO_{s}$, implying that the family $(\TO_{t})_{t\in\mathbb{T}}$ forms a semigroup.
The utility of transfer operators is that, when the space $\spF$ is suitably chosen, they {\em linearize} the process. Two key requirements are that $\spF$ is {\it invariant} under the action of $\TO_{t}$, that is $\TO_{t} [\spF] \subseteq \spF$ for all $t$ (a property that we tacitly assumed above), and \textit{rich enough} to represent the flow of the process. %
Two common choices for $\spF$ that fulfill both requirements are the space of bounded Borel-measurable functions and the space of square-integrable functions $\Lii$ with respect to the invariant distribution $\im$, when one exists. To ease our presentation we first focus on the latter case, and then extend our results to non-stationary time-homogeneous processes. Formally, the process $(X_t)_{t\in\mathbb{T}}$ admits an \emph{invariant distribution} $\pi$ when $\mXt{s}=\im$ implies $\mXt{t}=\im$ 
for all $s\leq t$ \citep{ross1995stochastic}.~This in turn allows one to define the transfer operator on $\spF = \Lii$.

While in the discrete-time setting the whole process can be studied only through $\TO=\TO_{1}$, when time is continuous the process is characterized by the infinitesimal generator of the semigroup $(\TO_{t})_{t\geq 0}$, defined as 
\vspace{-.2truecm}
\begin{equation}\label{eq:generator}
\LO\ := \lim_{t\to 0^+} (\TO_{t} - \Id)/t.
\end{equation}
$\LO$ can be properly defined on the Sobolev space $\Wii \subset \Lii$ formed by functions with square-integrable gradients~\citep[see][]{Lasota1994,ross1995stochastic}.

{\bf Learning transfer operators~}
In practice, dynamical systems are only observed, and neither $\TO$ nor its domain $\spF \,{=}\, \Lii$ are known, providing a key challenge to learn them from data. 
The most popular algorithms~\citep{Brunton2022,Kutz2016} aim to learn the action of $\TO\colon \spF\,{\to}\,\spF$ on a predefined Reproducing Kernel Hilbert Space (RKHS) $\spH$, forming a subset of functions in $\spF$. 
This allows one, via the kernel trick, to formulate the problem of learning the {\em restriction} of $\TO$ to $\spHX$,
$\TO_{\vert_{\spHX}}\colon\spH\to\spF$, via empirical risk minimization
~\citep{Kostic2022}.  
However, recent theoretical advances~\citep{Korda2017,Klus2019,nuske2023finite}, proved that such algorithms are statistically consistent only to $\proj{\spHX} \TO_{\vert_{\spHX}}$, where $\proj{\spHX}$ is the orthogonal projection onto the closure of $\spHX$ in $\spF$. The projection $\proj{\spHX}$ constrains the {\em evolved} observables back inside $\spHX$, thereby, in general, altering the dynamics of the system. Therefore, to assure that one properly learns the dynamics, two major requirements on $\spH$ are needed: i) $\TO_{\vert_{\spHX}}$ needs to approximate well $\TO$, i.e. the space $\spH$ needs to be big enough relative to the domain of $\TO$; ii) the difference between the projected restriction and the true one, i.e. the approximation error $\norm{[\Id-\proj{\spHX}]\TO_{\vert_{\spHX}}}$, needs to be small.

When $\spH$ is an infinite-dimensional {\em  universal} RKHS, both the above requirements are satisfied~\citep{Kostic2022}, i.e.~$\spH$ is dense in $\spF$ and the approximation error is zero, leading to an arbitrarily good approximation of dynamics with enough data. Still, another key issue arises: the norms on the a-priori chosen $\spH$ and the unknown $\spF$ do not coincide, since the latter depends on the process itself. This \textit{metric distortion} phenomenon has been recently identified as the source of spurious estimation of the spectra of $\TO$ ~\citep{kostic2023koopman}, limiting the utility of the learned transfer operators. Indeed, even if $\TO$ is self-adjoint, that is the eigenfunctions are orthogonal in $\spF$, the estimated ones will not be orthogonal in $\spH$, giving rise to spectral pollution~\citep{Kato76}. This motivates one to additionally require that iii) $\spH$ is a \textit{subspace} of $\spF$, i.e. both spaces have the same norm. 

To summarize, the desired optimal $\spH$ is the 
\textit{leading} \textit{invariant} 
\textit{subspace} of $\TO$, that is the subspace corresponding to the largest (in magnitude) eigenvalues of $\TO$. This subspace $\spH$ achieves zero approximation error, eliminates metric distortion and best approximates (in the dynamical system sense) the operator $\TO$. 
Since any RKHS $\spH$ is entirely described~by a {\em feature map}, learning a leading invariant subspace $\spH$ from data is, fundamentally, a \textit{representation learning} problem.

{\bf Approach~} We start by formalizing the problem of learning a good finite dimensional representation space for $\TO$, and then address the same for the generator $\LO$. We keep the discussion less formal here, and state our main results more precisely in the next section. Our approach is inspired by the following upper and lower bounds on the approximation error, a direct consequence of the norm change from $\spH$ to $\spF$, 
\begin{equation}\label{eq:bias_bound}
\norm{[\Id-\proj{\spHX}]\TO \proj{\spHX}}^2 \lambda_{\min}^+(C_{\spHX}) \leq \norm{[\Id-\proj{\spHX}]\TO_{\vert_{\spHX}}}^2 \leq \norm{[\Id-\proj{\spHX}]\TO \proj{\spHX}}^2 \lambda_{\max}(C_{\spHX}),    
\end{equation}
where 
$C_{\spHX}$ is the covariance operator on $\spHX$ w.r.t. the measure $\im$, while $\lambda_{\min}^+$ and $\lambda_{\max}$ are the smallest and largest non-null eigenvalues, respectively. 
Note that the norms on the hypothetical domain $\spH$ and true domain $\Lii$ coincide if and only if $C_{\spH}=I$, 
in which case equalities hold in \eqref{eq:bias_bound} and the approximation error becomes $\norm{[\Id-\proj{\spHX}]\TO \proj{\spHX}}$.

When the operator $\TO$ is known, the latter quantity can be directly minimized by standard numerical algorithms for spectral computation to find invariant subspaces \citep[see e.g.][]{golub2013matrix}.~Unfortunately, in our stochastic setting $\TO$ is unknown since we cannot compute the conditional expectation in \eqref{eq:transfer-operators}.
~To overcome this issue we propose a learning approach to recover the invariant space $\spHX$, which is rooted in the singular value decomposition, holding under the mild assumption that $\TO$ is a compact operator\footnote{This property is fulfilled by a large class of Markov processes \citep[see e.g.][]{Kostic2022} and is weaker than requiring the operator being Hilbert-Schmidt as in \citep{mardt2018vampnets}.}. The main idea is that the subspace made of the leading $r$ \textit{left singular functions} of $\TO$ serves as a good approximation of the desired leading invariant subspace of $\TO$. Namely, due to the orthonormality of the singular functions, we have that $C_{\spHX} = I$ and $\proj{\spHX} \TO$ becomes the $r$-truncated SVD of $\TO$, that is, its best rank-$r$ approximation. 
Therefore, according to \eqref{eq:bias_bound}, the approximation error is at most  
$\sval_{r+1}(\TO)$, which can be made arbitrarily small by rising $r$. Moreover, the distance of the subspace of left singular functions to the desired leading invariant subspace is determined by the "\textit{normality}" of $\TO$~\citep{TrefethenEmbree2020}. If the operator $\TO$ is \textit{normal}, that is $\TO \TO^* = \TO^* \TO$, then both its left and right singular spaces are invariant subspaces of $\TO$, resulting in zero approximation error irrespectively of $r$. This %
leads us to the following optimization problem 
\begin{equation}\label{eq:optimization}
\max_{\spHX,\spHY\subset\Lii}\left\{\hnorm{\proj{\spHX}\TO\proj{\spHY}}^2\;\vert\; C_{\spHX} = C_{\spHY} = I, \dim(\spHX)\leq r,\,\dim(\spHY)\leq r\right\}.
\end{equation}

{Relying on the application of Eckart-Young-Mirsky's Theorem~\citep{Kato76}, we can show that the desired representation space $\spHX$ can be computed by solving \eqref{eq:optimization}. Note that, in general, the auxiliary space $\spHY$ is needed to capture right singular functions, while if we have prior knowledge that $\TO$ is normal without loss of generality one can set $\spHX=\spHY$ in \eqref{eq:optimization}. 

The same reasoning cannot be applied straight away to the generator $\LO$ of a continuous dynamical system, which in is not even guaranteed to be bounded~\citep{Lasota1994}, let alone compact. For {\em time-reversal invariant} processes, however, 
$\TO$ and $\LO$ are self-adjoint, that is $\TO = \TO^*$ and $\LO = \LO^*$. This includes the important case of Langevin dynamics, 
which are of paramount importance for molecular dynamics 
\citep[see, e.g.,][]{tuckerman2010statistical}. For such systems we show (see Theorem~\ref{thm:main_cont}) that the leading $r$-dimensional subspace of the generator $\LO$ can be found by solving the following optimization problem {featuring the \emph{partial trace} objective}
\begin{equation}\label{eq:optimization_continious}
\max_{\spH\subset\Wii}\left\{\tr\left(\proj{\spH}\, \LO\,\proj{\spH}\right)\;\vert\; C_{\spH} = I, \dim(\spHX)\leq r\right\},
\end{equation}
where now the projector is $\proj{\spHX}\colon\Wii\to\Wii$. We stress that, since we assume $\LO=\LO^*$, we can relax the above assumption on the compactness of the transfer operators $(\TO_t)_{t\geq0}$, and still show that the leading invariant subspace of $\LO$ is the optimal $\spH$ in \eqref{eq:optimization_continious}. So, on such $\spH$ we can estimate well $\LO$ via generator operator regression, see e.g. \citep{hou2023sparse, klus2020generator}.

\section{Deep projection score functionals}\label{sec:score}
\vspace{-.1truecm} In this section we show how to solve the representation learning problems \eqref{eq:optimization} and \eqref{eq:optimization_continious} using DNNs. In the discrete case, we consider a generalized version of 
problem \eqref{eq:optimization}, which encompasses non-stationary processes, for which the probability distributions change along the trajectory. Namely, let $\rvX$ and $\rvY$ be two $\spX$-valued random variables with probability measures $\mX$ and $\mY$, respectively. Because of time-homogeneity, w.l.o.g. $\rvX$ models the state at time $0$ and $\rvY$ its evolution after some time $t$. Then, the transfer operator can be defined on the data-dependent domains $\TO_{t}\colon \LiiY\to\LiiX$. Replacing $\proj{\spHY}$ with $\proj{\spHY}'\colon \LiiY\to\LiiY$ in \eqref{eq:optimization}, and using the covariances $C_{\spHX}$ and $C_{\spHY}$ w.r.t. the measures $\mX$ and $\mY$, we obtain the appropriate representation learning problem for non-stationary process; see~App.~\ref{app:prob} for more details. Within this general setting 
we optimize two feature maps 
\begin{equation}
\embXp{}: \spX \to \R^r,\,\text{ and }\; \embYp{}:\spX \to \R^r,
 \label{eq:FM}
\end{equation}
parameterized by $w$ taking values in some set $\spParam$. 
Next, defining the two RKHSs\footnote{Here, $\embXp{,j}$ and $\embYp{,j}$ are the $j$-th component of $\embXp{}$ and $\embYp{}$, respectively.} $\spHXp:=\Span(\embXp{,j})_{j\in [r]}$ and $\spHYp:=\Span(\embYp{,j})_{j\in [r]}$, both equipped with the standard inner product, 
we aim to solve \eqref{eq:optimization} by optimizing over the weights $w$. To avoid solving the constrained optimization problem, we further propose to  relax the hard constraints in \eqref{eq:optimization} through a \textit{metric distortion loss} $\Reg\colon \R^{r\times r}\to\R_{+}$ that is zero if and only if it is applied to the identity matrix. Our choice in Section \ref{sec:exp}, well defined for SPD matrices $C$, is 
\begin{equation*}
    \Reg(C) :=  \tr(C^{2} - C -\ln(C)).
\end{equation*}
Finally, in Lem.~\ref{lm:proj_to_cov},~App. \ref{app:score} we show that \eqref{eq:optimization} can be solved by maximizing, for $\reg > 0$
\begin{equation}\label{eq:score}
\score^\reg(\param) =
\hnorm{(\Cxp)^{\dagger/2}\Cxyp (\Cyp)^{\dagger/2}}^2 - \reg \left(\Reg(\Cxp) + \Reg(\Cyp)\right),
\end{equation}
where we introduced the uncentered covariance matrices
\begin{equation}\label{eq:cov_crosscov}
\Cxp := \EE\, \embXp{}(\rvX) \embXp{}(\rvX)^\top,\; \Cyp := \EE \,\embYp{}(\rvY) \embYp{}(\rvY)^\top\,\text{ and }\, \Cxyp := \EE\, \embXp{}(\rvX) \embYp{}(\rvY)^\top.
\end{equation}
Notice that
if $\reg =0$ and} the covariances $\Cxp$ and $\Cyp$ are nonsingular, then the score reduces to $\hnorm{(\Cxp)^{-\frac{1}{2}}\Cxyp (\Cyp)^{-\frac{1}{2}}}^2$ which is called VAMP-2 score in the VAMPNets approach of~\citep{mardt2018vampnets}. Moreover, if in \eqref{eq:score} the HS norm is replaced by the nuclear norm, the score becomes 
the objective of canonical correlation analysis (CCA) \citep{harold1936,hardoon2004} in feature space. When DNNs are used to parametrize~\eqref{eq:FM}, such score is the objective of Deep-CCA~\citep{andrew2013deep}, known as VAMP-1 score \citep{mardt2018vampnets} in the context of molecular kinetics. 
Therefore, by maximizing $\score^\reg$ we look for the strongest \emph{linear correlation} between $\spHXp$ and $\spHYp$. We stress that differently from Deep-CCA and VAMPNets the crucial addition of $\Reg$ is to \emph{decorrelate} the features within each space, guiding the maximization towards a solution in which the norms of $\spHX$ and $\spHY$ coincide with those of $\LiiX$ and $\LiiY$, overcoming metric distortion. 

While in the optimal representation the covariances $\Cxp$ and $\Cyp$ are the identity, in general they are non-invertible, making the score $\score^\reg$ non-differentiable during optimization. Indeed, unless the rank of both covariances is stable for every $\param \in \spParam$, we might have exploding gradients~\citep{Golub1973}. 
Even ignoring differentiability issues, the use of the pseudo-inverse, as well as the use of the inverse in the non-singular case, can introduce severe numerical instabilities when estimating $\score^\reg$ and its gradients during the training process.~More precisely, {the numerical conditioning of evaluating $\score^\reg$ using \eqref{eq:score}} is determined by the covariance condition numbers 
$\eval_1(\Cxp) / \eval_r(\Cxp)$ and $\eval_1(\Cyp) / \eval_r(\Cyp)$ that can be very large in practical situations (see e.g. the fluid dynamics example in Sec.~\ref{sec:exp})
To overcome this issue, we introduce the relaxed score 
{
\begin{equation}\label{eq:rel_score}
\rscore^\reg(\param):= 
\hnorm{\Cxyp}^2\,/\, (\norm{\Cxp}\norm{\Cyp})
- \reg \left(\Reg(\Cxp) + \Reg(\Cyp)\right),
\end{equation}
which,} as implied by Theorem \ref{thm:main} below, is a lower bound for the score in \eqref{eq:score}.
A key advantage of this approach is that the score $\rscore^\gamma$ is both differentiable (apart from the trivial case $\Cxp$, $\Cyp=0$) and has stable gradients, since we avoid matrix inversion. Indeed, computing $\rscore^\reg$ is always well-conditioned, as the numerical operator norm has conditioning equal to one. 
The following theorem provides the theoretical backbone of our representation learning approach.
\begin{restatable}{theorem}{thmScore}\label{thm:main}
If $\TO{\colon}\LiiY\,{\to}\,\LiiX$ is compact, $\spHXp\,{\subseteq}\,\LiiX$ and $\spHYp\,{\subseteq}\,\LiiY$, then for all $\reg\geq0$  
\begin{equation}\label{eq:score_bound}
\rscore^{\reg}(\param)\leq\score^{\reg}(\param) \leq %
\sigma^2_1(\TO)+\dots+\sigma^2_r
(\TO).
\end{equation}
Moreover, if $(\embXp{,j})_{j\in[r]}$ and $(\embYp{,j})_{j\in[r]}$ are the leading $r$ left and right singular functions of $\TO$, respectively, then both equalities in \eqref{eq:score_bound} hold. Finally, if the operator $\TO$ is Hilbert-Schmidt, $\sval_{r}(\TO)>\sval_{r+1}(\TO)$ and $\reg>0$, then the ``only if'' relation is satisfied up to unitary equivalence.
\end{restatable}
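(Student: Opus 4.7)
My plan is to reduce the statement to two ingredients: (i) the identification of the HS term in $\score^\reg$ with the HS norm of a rank-$r$ restriction of $\TO$, which is exactly the content of Lem.~\ref{lm:proj_to_cov} in App.~\ref{app:score}; and (ii) the Eckart-Young-Mirsky theorem for compact operators. For the upper bound in \eqref{eq:score_bound}, Lem.~\ref{lm:proj_to_cov} yields $\hnorm{(\Cxp)^{\dagger/2}\Cxyp(\Cyp)^{\dagger/2}}^2 = \hnorm{\proj{\spHXp}\TO\proj{\spHYp}}^2$, where $\proj{\spHXp}\colon\LiiX\to\LiiX$ and $\proj{\spHYp}\colon\LiiY\to\LiiY$ are the orthogonal projections onto the closures of $\spHXp$ and $\spHYp$. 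Since each projection has rank at most $r$ and $\TO$ is compact, Eckart-Young-Mirsky gives $\hnorm{\proj{\spHXp}\TO\proj{\spHYp}}^2 \leq \sval_1^2(\TO)+\dots+\sval_r^2(\TO)$; together with the non-negativity of the metric-distortion penalty ($\Reg\geq 0$), this provides the right-most inequality.

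For the relaxation $\rscore^\reg\leq\score^\reg$ the penalty terms cancel, so it suffices to prove $\hnorm{\Cxyp}^2/(\norm{\Cxp}\norm{\Cyp}) \leq \hnorm{(\Cxp)^{\dagger/2}\Cxyp(\Cyp)^{\dagger/2}}^2$. I would rely on the elementary operator inequality that, for any PSD matrix $A$ and any $B$ with $\range(B)\subseteq\range(A)$, $\tr(B^*A^\dagger B)\geq\hnorm{B}^2/\norm{A}$, which follows from $A^\dagger\succeq\norm{A}^{-1}P_{\range(A)}$. Applying it once with $A=\Cxp$ and $B=\Cxyp(\Cyp)^{\dagger/2}$, and once with $A=\Cyp$ applied to the transposed $B$, and using the standard range inclusions for cross-covariance operators (range of $\Cxyp$ lies in range of $\Cxp$, and analogously on the other side), yields the desired inequality.

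Equality at the singular functions is a direct computation: when $(\embXp{,j})_{j\in[r]}$ and $(\embYp{,j})_{j\in[r]}$ are the leading $r$ left and right singular functions of $\TO$, their orthonormality in $\LiiX$ and $\LiiY$ gives $\Cxp=\Cyp=\Id$, while $(\Cxyp)_{jk}=\EE[\embXp{,j}(\rvX)\embYp{,k}(\rvY)] = \langle\embXp{,j},\TO\embYp{,k}\rangle_{\LiiX} = \sval_k(\TO)\,\delta_{jk}$, so $\Cxyp=\Diag(\sval_1(\TO),\dots,\sval_r(\TO))$. Both scores therefore saturate the upper bound, while $\Reg(\Id)=0$. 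For the ``only if'' direction under $\reg>0$, $\TO$ Hilbert-Schmidt, and $\sval_r(\TO)>\sval_{r+1}(\TO)$, equality of the scores with the bound forces the penalty terms to vanish; since $\Reg$ is zero only at $\Id$, we conclude $\Cxp=\Cyp=\Id$, so the features are orthonormal in their respective spaces. The remaining condition $\hnorm{\proj{\spHXp}\TO\proj{\spHYp}}^2 = \sum_{i=1}^r \sval_i^2(\TO)$, combined with the uniqueness clause of Eckart-Young-Mirsky (valid under Hilbert-Schmidt-ness and the spectral gap), identifies $\spHXp$ and $\spHYp$ with the spans of the top-$r$ left and right singular subspaces, which determines the features up to a unitary transformation within each space. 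The main technical subtleties are the identification step of Lem.~\ref{lm:proj_to_cov} (careful handling of pseudoinverses and range inclusions) and this uniqueness invocation, which critically relies on the spectral gap.
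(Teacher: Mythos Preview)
Your proposal is correct and follows essentially the same route as the paper: the identification via Lem.~\ref{lm:proj_to_cov}, the upper bound via Eckart--Young--Mirsky (which the paper packages as Lem.~\ref{lm:projections}), the equality case by direct computation of $\Cxp=\Cyp=\Id$ and $\Cxyp=\Diag(\sval_i)$, and the ``only if'' via $\Reg=0\Rightarrow\Cxp=\Cyp=\Id$ followed by the uniqueness clause of Eckart--Young--Mirsky under the HS and spectral-gap hypotheses. The only difference is cosmetic: where the paper dismisses $\rscore^\reg\le\score^\reg$ as ``a standard matrix norm inequality'', you spell out the mechanism via $A^\dagger\succeq\norm{A}^{-1}P_{\range(A)}$ and the range inclusions $\range(\Cxyp)\subseteq\range(\Cxp)$, $\range((\Cxyp)^*)\subseteq\range(\Cyp)$, which is a welcome clarification.
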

This result establishes that the relaxed score is a tight lower bound, in the optimization sense, for \eqref{eq:score}. %
More precisely, in general $\abs{\rscore^\reg(w){-} \score^\reg(w)}$ is arbitrarily small for small enough $\Reg(w)$, while on the feasible set in \eqref{eq:optimization} both scores coincide. %
For instance, for a deterministic linear system this happens as soon as the classes of features $\embXp{}$ and $\embYp{}$ include linear functions. In general, we can appeal to the universal approximation properties of DNN \citep{cybenko} to %
fulfill the hypotheses of Theorem \ref{thm:main}. 

{\bf Generator learning~} We now consider how to learn the representation of continuous time-homogeneous processes arising from stochastic differential equations (SDE) of the form
\begin{equation}\label{eq:ito_SDE}
dX_t = A(X_t)\,dt + B(X_t)\,dW_t,
\end{equation}
where $A$ and $B$ are the drift and diffusion terms, respectively, and $W_t$ is a Wiener process.
The solution of \eqref{eq:ito_SDE} is a stochastic process that is time-reversal invariant w.r.t. its invariant distribution $\im$, \citep{arnold1974}. %
To show that solving \eqref{eq:optimization_continious} leads to learning the leading invariant subspace of $\LO$, we restrict to the case $\LO=\LO^*$ and consider the partial trace of $\LO$ w.r.t. to the subspace $\spHXp:=\Span(\embXp{,j})_{j\in[r]}\subseteq\Wii$ as our objective function. 
Being $\LO$ self-adjoint, we can relax the compactness assumption of Thm.~\ref{thm:main}, requiring only the much weaker condition that $\LO$ has the largest eigenvalues separated from the \textit{essential spectrum}~\citep{Kato76}. In particular, it holds $\tr\left(\proj{\spHXp}\, \LO\,\proj{\spHXp}\right)= \tr\left((\Cxp)^{\dagger}\Cxyd \right)$, where $\Cxyd =  \EE [\embXp{}(X)\, d\embXp{}(X)^\top]$ is the continuous version of the cross-covariance and $d\embXp{}(\cdot)$ is given by the It\={o} formula (see e.g. \citep{arnold1974,klus2020generator, hou2023sparse})
\begin{align}
d\embXp{,i}(x)  :=& \EE[\nabla\embXp{,i}(X)^\top dX/dt + \tfrac{1}{2}(dX/dt)^\top \nabla^2\embXp{,i}(X) (dX/dt)\,\vert\, X= x] \label{eq:ito_1} \\
=& \nabla\embXp{,i}(x)^\top A(X) + \tfrac{1}{2}\tr(B(x)^\top \nabla^2\embXp{,i}(x) B(x)). \label{eq:ito_2}
\end{align}
The following result, the proof of which is given in~App.~\ref{app:SDE}, then justifies our continuous DPNets.
\begin{restatable}{theorem}{thmScoreCont}\label{thm:main_cont}
If $\spHXp\subseteq\Wii$, and $\eval_1(\LO)\geq \cdots \geq \eval_{r+1}(\LO)$ are eigenvalues of $\LO$ above its essential spectrum,
then for every $\reg\geq0$ it holds%
\begin{equation}\label{eq:score_bound_cont}
\score_{\partial}^\reg(\param):= \tr\left((\Cxp)^{\dagger}\Cxyd \right) - \reg \Reg(\Cxp) \leq \eval_1(\LO)+\dots+\eval_r(\LO),
\end{equation}
and the equality is achieved  when $\embXp{,j}$ is the eigenfunction of $\LO$ corresponding to $\eval_j(\LO)$,  for 
$j\in[r]$. 
\end{restatable}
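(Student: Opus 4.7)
The plan is to reduce the objective to a partial trace of $\LO$ on $\spHXp$ and then apply the Ky Fan maximum principle for self-adjoint operators. First, I would establish the identity
\begin{equation*}
\tr\bigl((\Cxp)^{\dagger}\Cxyd\bigr) \;=\; \tr\bigl(\proj{\spHXp}\LO\,\proj{\spHXp}\bigr),
\end{equation*}
announced in the paragraph preceding the theorem. Using the It\^o formula $d\embXp{,i}(x) = [\LO\,\embXp{,i}](x)$ displayed between \eqref{eq:ito_1} and \eqref{eq:ito_2}, the entries of $\Cxp$ and $\Cxyd$ are the Gram matrices of $\Id$ and $\LO$ in the feature basis, $(\Cxp)_{ij} = \scalarp{\embXp{,i},\embXp{,j}}_\pi$ and $(\Cxyd)_{ij} = \scalarp{\embXp{,i},\LO\,\embXp{,j}}_\pi$. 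Choosing a linear change of basis $U$ with $U^\top \Cxp U = I$ turns $e_j := \sum_i u_{ij}\embXp{,i}$ into an $L^2_\pi$-orthonormal basis of $\spHXp$, and since $UU^\top$ agrees with $(\Cxp)^{\dagger}$ on $\range(\Cxp)$, the claimed identity follows by cyclicity of the trace.

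Next, since $\LO = \LO^*$ and $\eval_1(\LO) \geq \cdots \geq \eval_{r+1}(\LO)$ are eigenvalues above the essential spectrum, I would invoke the Ky Fan form of the max-min principle for self-adjoint operators: for every subspace $V$ of dimension at most $r$ contained in the form domain of $\LO$,
\begin{equation*}
\tr(P_V\,\LO\,P_V) \;\leq\; \eval_1(\LO) + \cdots + \eval_r(\LO).
\end{equation*}
Applied to $V = \spHXp \subseteq \Wii$, this bounds the first term of $\score_{\partial}^\reg$. The metric-distortion penalty satisfies $\Reg(\Cxp) \geq 0$: each positive eigenvalue $\lambda$ of $\Cxp$ contributes $\lambda^2 - \lambda - \ln\lambda$, which is strictly convex with unique minimum $0$ at $\lambda = 1$, while a singular $\Cxp$ sends $\Reg(\Cxp)$ to $+\infty$ so the inequality is vacuous there. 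Combining these two ingredients for $\reg \geq 0$ yields \eqref{eq:score_bound_cont}.

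For the equality statement: when $\embXp{,j}$ is the eigenfunction of $\LO$ with eigenvalue $\eval_j(\LO)$, self-adjointness of $\LO$ ensures the $\embXp{,j}$ are $L^2_\pi$-orthonormal (orthogonal across distinct eigenspaces, orthonormal basis chosen within each), so $\Cxp = I$ and $\Reg(\Cxp) = 0$, while $\Cxyd$ is diagonal with entries $\eval_j(\LO)$; both terms of $\score_{\partial}^\reg$ then saturate the bound simultaneously. The main technical obstacle is the rigorous application of the Ky Fan principle to the possibly unbounded operator $\LO$: the hypothesis that $\eval_{r+1}(\LO)$ also lies above the essential spectrum is exactly what guarantees that the top $r$ values are genuinely isolated eigenvalues of finite multiplicity, and the inclusion $\spHXp \subseteq \Wii$ ensures the test subspace sits in the form domain, so the classical max-min principle (e.g.\ Reed--Simon, Theorem XIII.1) applies as stated.
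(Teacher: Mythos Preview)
Your proposal is correct and follows the same overall strategy as the paper: establish the identity $\tr\bigl((\Cxp)^{\dagger}\Cxyd\bigr)=\tr\bigl(\proj{\spHXp}\LO\,\proj{\spHXp}\bigr)$, then bound the partial trace by the top-$r$ eigenvalue sum via a max--min argument, and finally verify equality on eigenfunctions. The only notable differences are in the execution: the paper derives the identity by recycling the RKHS injection machinery of Lemma~\ref{lm:proj_to_cov} (writing $\proj{\spHXp}=UQ(\Cxp)^{\dagger/2}Q^*\TS^*$ and computing $Q^*\TS^*\LO\TS Q=\Cxyd$ through the generator limit and It\^o), whereas you give a more self-contained Gram-matrix argument via a whitening change of basis; and for the partial-trace bound the paper proves its own Lemma~\ref{lm:partial_trace} by induction on $r$ using Courant--Fischer, whereas you invoke Ky Fan / Reed--Simon directly. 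Your route is slightly more elementary and also makes the role of the penalty $\Reg(\Cxp)\geq 0$ explicit, which the paper's proof leaves implicit.
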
 
Notice how $\tr\left((\Cxp)^{\dagger}\Cxyd \right)$ in Eq. \eqref{eq:score_bound_cont} is the sum of the (finite) eigenvalues of the symmetric eigenvalue problem $\Cxyd - \lambda \Cxp$. Thus, in non-pathological cases, its value and gradient can be computed efficiently in a numerically stable way, see e.g. \citep{Andrew1998}.
\section{Methods}\label{sec:methods}
\vspace{-.1truecm}
{\bf Learning the representation~} To optimize a DPNet representation one needs to replace the population covariances and cross-covariance in Eqs~\eqref{eq:score},~\eqref{eq:rel_score} and \eqref{eq:score_bound_cont} with their empirical counterparts. In practice, given $\embXp{}, \embYp{} \colon\spX \to \R^{r}$ and a dataset $\Data=(x_i,x'_i)_{i\in[n]}$ consisting of samples from the joint  distribution of $(X,X')$ one estimates the needed covariances $\Cxp$, $\Cyp$ and $\Cxyp$ as
\begin{equation*}\label{eq:em_cov}
\ECxp \!=\! \tfrac{1}{n} 
{\sum_{i\in[n]}} \embXp{}(x_i) \embXp{}(x_i)^\top\!, \ECyp \!=\! \tfrac{1}{n} 
{\sum_{i\in[n]}} \embYp{}(x'_i) \embYp{}(x'_i)^\top
\text{ and }\,
\ECxyp\! =\! \tfrac{1}{n} 
{\sum_{i\in[n]}} \embXp{}(x_i) \embYp{}(x'_i)^\top\!\!.
\end{equation*}

\begin{algorithm}[t!]
\caption{DPNets Training %
}
\label{alg:dpnets}
\begin{algorithmic}[1]
\Require Data $\Data$; metric loss parameter $\reg$; DNNs $\psi_w, \psi'_w: \spX \to \R^r$; optimizer $U$; \# of steps $k$.
\State Initialize DNN weights to $w_1$.
\For{$j=1$ to $k$}; 
    \State  $\hat{C}_{X}^{w_{j}}, \hat{C}_{X^{\prime}}^{w_{j}}, \hat{C}_{XX^{\prime}}^{w_{j}} \gets$ %
     Covariances for $\psi_{w_{j}}$ and $\psi'_{w_j}$ from a mini-batch of $m \leq n$ samples.
    \State $\mathrlap{F(w_{j})}\phantom{\hat{C}_{X}^{w_{j}}, \hat{C}_{X^{\prime}}^{w_{j}}, \hat{C}_{XX^{\prime}}^{w_{j}}} \gets -  \erscore^\reg_m(\param_{j}) $ from ~\eqref{eq:rel_score} and the covariances in Step 3.
    \State $\mathrlap{w_{j+1}}\phantom{\hat{C}_{X}^{w_{k}}, \hat{C}_{X^{\prime}}^{w_{k}}, \hat{C}_{XX^{\prime}}^{w_{j}}}  \gets U(w_{j}, \nabla F(w_{j})$) where $\nabla F(w_{j})$ is computed using backpropagation.

\EndFor
\State \textbf{return} representations $\psi_{w_K}$, $\psi'_{w_K}$.
\end{algorithmic}
\end{algorithm}

\vspace{-.2truecm}
In~App.~\ref{app:concScore} we prove that the empirical scores $\escore^{\reg}_{n}\colon \spParam\to\R$ and $\erscore^{\reg}_{n} \colon \spParam\to\R$, that is the scores computed using empirical covariances, concentrate around the true ones with high probability for any fixed $w$. Obtaining uniform guarantees over $w$ requires advanced tools from empirical processes and regularity assumptions on the representations which could be the subject of future work. The difficulties arising from the estimation of $\score^{\reg}$ during optimization are also addressed.%

In Alg.~\ref{alg:dpnets} we report the training procedure for DPNets-relaxed (using score \eqref{eq:rel_score}), which can also be applied to~\eqref{eq:score}  and~\eqref{eq:score_bound_cont} suitably modifying step 4. When using mini-batch SGD methods to compute $\nabla \escore^\reg_{m} (w)$ or $\nabla \erscore^\reg_m(w)$, the batch size $m$ should be taken sufficiently large to mitigate the impact of biased stochastic estimators. The time complexity of the training algorithm is addressed in App.~\ref{app:train}.
 
\vspace{-.1truecm}
{\bf Operator regression~} Once a representation $\embXp{}$ is learned, it can be used %
within the framework of operator regression, see~Fig.~\ref{fig:pipeline}. 
Following the reasoning in \citep{Kostic2022} one sees that any model $\ETO_{\param} \colon \spHXp\to \spHXp$ of the transfer operator $\TO$
acts on functions in $\spH_{\param}$ as
\begin{equation}\label{eq:TO_estimator}
h_z:=\embXp{}(\cdot)^\top z\quad\mapsto \quad\ETO_{\param} h_z:= \embXp{}(\cdot)^\top \EEstim z,\;z\in\R^{r},    
\end{equation}
where $\EEstim\in\R^{r\times r}$ is a matrix in the representation space. 
For example, denoting the data matrices 
$\embMXp = [\embXp{}(x_1)\,\vert\,\cdots\,\vert\,\embXp{}(x_n)],\,\embMYp = [\embXp{}(x_1')\,\vert\,\cdots\,\vert\,\embXp{}(x_n')]\in\R^{r\times n}$, the ordinary least square estimator (OLS) minimizes the empirical risk $\hnorm{\embMYp - \EEstim^\top\,\embMXp}^2$, yielding $\EEstim:= (\embMXpT)^\dagger \embMYpT$.

\vspace{-.15truecm}
{\bf Downstream tasks~} Once $\EEstim$ is obtained, it can be used predict the next expected state of the dynamics given the initial one, compute the modal decomposition of an 
observable~\citep{Kutz2016}, estimate the spectrum of the transfer operator~\citep{kostic2023koopman} {or controlling the dynamics~\citep{Proctor2016}}.  
Indeed, recalling that $\rvY$ is the one step ahead evolution of $\rvX$, we can use \eqref{eq:TO_estimator} to approximate $\EE[ h_z(\rvY)\,\vert\,\rvX = x]$ as $ \embXp{}(x)^\top \EEstim z$. Moreover, relying on the reproducing property of $\spHXp$, the predictions can be extended to functions $f\colon\spX\to\R^{\ell}$ as 
$\EE[ f(\rvY)\,\vert\,\rvX = x] \approx  \embXp{}(x)^\top (\embMXpT)^\dagger \widehat{F}'$, where $\widehat F' = [f(x_1')\,\vert\,\ldots\,\vert\,f(x_n')]^\top\in\R^{n\times \ell}$ is the matrix of the observations of the evolved data. Clearly, when the observable is the state itself (i.e. $f(x)=x$), we obtain one step ahead predictions. 

\vspace{-.1truecm}
Next, observe that eigenvalue decomposition of the matrix $\EEstim$ leads to the eigenvalue decomposition of the operator $\ETO$. 
Namely, let $(\eeval_i, \elevec_i, \erevec_i)\in\C \times \C^d \times \C^d$ be an eigen-triplet made of eigenvalue, left eigenvector and right eigenvector of $\EEstim$, that is $\EEstim \erevec_i = \eeval_i \erevec_i$, $ \elevec_i^* \EEstim = \eeval_i \elevec_i^*$ and $\elevec_i^*\erevec_{k} = \delta_{i,k}$, $i,k\in[r]$, we directly obtain, using \eqref{eq:TO_estimator}, the spectral decomposition
\begin{equation}\label{eq:estimator_evd}
\ETO_{\param}= \textstyle{\sum_{i\in[r]}} \eeval_i \,\erefun_i \otimes \elefun_i, \quad\text{ where }\quad \erefun_i(x) := \embXp{}(x)^\top \erevec_i\;\text{ and }\; \elefun_i(x) :=  (\elevec_i)^*\embXp{}(x).
\end{equation}
\vspace{-.4truecm}

Finally, we can use the spectral decomposition of $\EEstim$ to efficiently forecast observables for several time-steps in the future via $\EE[h_z(X_t)\,\vert\,X_0=x]\approx \sum_{i\in[r]} \eeval_i^t \,\scalarp{\elefun_i,h_z}_{\spHXp}\,\erefun_i(x)$, which is known as the extended dynamic mode decomposition (EDMD), see e.g. ~\citep{Brunton2022}. Noticing that $\scalarp{\elefun_i,h_z}_{\spHXp} = (\elevec_i)^* z = (\elevec_i)^*\EEstim z / \eeval_i$, when $\eeval_i\neq0$, and, again, using the reproducing property, we can extend this approximation to vector-valued observables $f\colon\spX\to\R^{\ell}$ as  
\begin{equation}\label{eq:KMD}
\EE[f(X_t)\,\vert\,X_0=x]\approx \textstyle{\sum_{i\in[r]}} \eeval_i^{t-1} \,\big(\embXp{}(x)^\top \erevec_i\big)\,\,\big(\elevec_i^*(\embMXpT)^\dagger \widehat{F}'\big)\in\R^{\ell}.    
\end{equation}

\section{Experiments}\label{sec:exp}
\vspace{-.1truecm}
In this experimental section we show that DPNets (i) learn reliable representations of non-normal dynamical systems, (ii) can be used in its continuous-dynamics form~\eqref{eq:score_bound_cont} to approximate the eigenvalues of a Langevin generator, and (iii) outperforms kernel methods and auto-encoders in large-scale and/or structured (e.g. graphs and images) data settings. To have a fair comparison, every neural network model in these experiments has been trained on the same data splits, batch sizes, number of epochs, architectures and seeds. The learning rate, however, has been optimized for each one separately. We defer every technical detail, as well as additional results to App.~\ref{app:exp}. The code to reproduce the examples can be found at \url{https://pietronvll.github.io/DPNets/}, and it heavily depends on Kooplearn \url{https://kooplearn.readthedocs.io/}. 

\textbf{Baselines}  We compared our methods DPNets ($\score^{\reg}$) and DPNets-relaxed ($\rscore^{\reg}$), where appropriate, with the following baselines:  Dynamic AE (DAE) of \citep{Lusch2018}, Consistent AE of \citep{azencot2020forecasting}, DMD of \citep{Schmid2010}, KernelDMD and ExtendedDMD of \citep{Williams2015a, Williams2015b}, and VAMPNets of \citep{mardt2018vampnets}.

{\bf Logistic Map~} We study the 
dynamical system
$X_{t + 1} = (4X_{t}(1 - X_{t}) + \xi_{t})~{\rm mod}~ 1$, for $\spX = [0, 1)$ and $\xi_{t}$ being i.i.d. trigonometric noise~\citep{Ostruszka2000}
. 
The associated transfer operator $\TO$ is {\em non-normal}, making the learning of its spectral decomposition particularly challenging~\citep{kostic2023koopman}.
Since $\TO$ can be computed exactly, we can
sidestep the problem of operator regression (see ~Fig.~\ref{fig:pipeline}), and focus directly on evaluating the quality of the representation encoded by $\spH_{\param}$. We evaluate DPNets on two different metrics: (i) the optimality gap $\sum_{i=1}^{r} \sigma_{i}^{2}(\TO) \,{-}\, \score^{0}(\param)$ 
for $r = 3$ and (ii) the spectral error, given by  $\max_{i}\min_{j} |\lambda_{i}(\proj{\spHX} \TO_{\vert_{\spHX}}) - \lambda_{j}(\TO)|$. While (i) informs on how close one is to solve~\eqref{eq:optimization}, (ii) measures how well the true eigenvalues of $\TO$ can be recovered within the representation space $\spH_{\param}$. In Tab.~\ref{tab:1} we compare the DPNets representation against VAMPNets, ExtendedDMD with a feature map of Chebyshev polynomials and the feature defining the trigonometric noise~\cite{Ostruszka2000} of the process itself. Notice that in this challenging setting recovering eigenvalues via singular vectors is generally problematic, so a moderate optimality gap may lead to a larger spectral error. In Fig.~\ref{fig:eigvals} we report the evolution of the spectral error during training for DPNets and VAMPNets, while we defer to App.~\ref{app:exp_logistic} an in-depth analysis of the role of the feature dimension. Notice that DPNets and DPNets-relaxed excel in both metrics.

\riccardo{In Fig2 the DPNets curve does not look good (converges and then diverges). Any idea why? Is it overfitting?
PN: Yes, possibly overfitting. MP: should we comment on this?
RG: If overfitting it seems that either early stopping or more data is needed. Also, i would expect the optimality gap for such simple problem to go closer to zero with enough data. If left like this i would not comment but It's a bit concerning.
}
\riccardo{noticed now that curves are for spectral error, i like optimzality gap more, is there overfitting there? maybe we can put that curve in appendix eventually.}

\vspace{-.1truecm}
{\bf Continuous dynamics~} %
We investigate a one-dimensional {\em continuous} SDE describing the stochastic motion of a particle into the Schwantes potential~\citep{Schwantes2015}. %
The invariant distribution for this process is the Boltzmann distribution $\pi(dx) \propto e^{-\beta V(x)}dx$, where $V(x)$~ is the potential at state $x$. 
The non-null eigenvalues of $\LO$ hold physical significance, as their absolute value represents the average rate at which particles cross one of the system's potential barriers~\citep{Kramers1940}; our objective is to accurately estimate them. Here $X_{t}$ is sampled \emph{non-uniformly} according to a geometric law.
In the lower panel of Fig.~\ref{fig:eigvals} we report the estimated  transition rates, of $\LO$ along the DPNets training loop. Notice that the embedding $\embXp{}$ progressively improves eigenvalue estimation, indicating that the invariant subspaces of the generator $\LO$ are well captured.

\begin{table}[t]
\setlength{\tabcolsep}{5pt}
\small
\centering
\begin{minipage}[t]{0.49\textwidth}
\begin{tabular}{r|cc}

\toprule
          Representation &        Spectral Error         &        Optimality Gap         \\
\midrule
        \textbf{DPNets} &          0.28 (0.03)          & \textbf{0.64} (\textbf{0.01}) \\
 \textbf{DPNets-relaxed} & \textbf{0.06} (\textbf{0.05}) &          1.19 (0.04)          \\
                VAMPNets &          0.21 (0.09)          &          0.97 (0.22)          \\
                 Cheby-T &             0.20              &             1.24              \\
             NoiseKernel &             0.19              &             2.17              \\
\bottomrule
\end{tabular}
\vspace{-.2truecm}
\caption{{\em Logistic Map}. Comparison of DPNets to relevant baselines. Mean and standard deviation are over 20 independent runs. We used $r=7$.}\label{tab:1}
\end{minipage}
\hfill
\begin{minipage}[t]{0.49\textwidth}
\begin{tabular}{r|ccc}
\toprule
          Model &  $\mathcal{P}$  &  Transition  &   Enthalpy $\Delta H$   \\
\midrule
 \textbf{DPNets} &      \textbf{12.84}      &     \textbf{17.59 ns}      & \textbf{-1.97 kcal/mol} \\
    Nys-PCR &      7.02       &      5.27 ns      & -1.76 kcal/mol \\
    Nys-RRR &      2.22       &      0.89 ns      & -1.44 kcal/mol \\
     \hline 
     \\
      Reference &        -        &       40 ns       & -6.1 kcal/mol  \\
\bottomrule
\end{tabular}
\vspace{-.2truecm}
\caption{{\em Chignolin}. Comparison between DPNets and kernel methods with Nystr\"om sampling \citep{Meanti2023}.}
    \label{tab:chignolin}
\end{minipage}
\vspace{-.7truecm}
\end{table}
\vspace{-.1truecm}
{\bf Ordered MNIST~} Following~\cite{Kostic2022}, we create a stochastic dynamical system by randomly sampling images from the MNIST dataset according to the rule that $X_t$ should be an image of the digit $t$ $({\rm mod} ~5)$ for all $t \in \N_0$.
Given an image from the dataset with label $c$, a model for the transfer operator $\TO$ of this system should then be able to produce an MNIST-alike image of the next digit in the cycle. 
 In the upper panel of Fig.~\ref{subfig:forecasts} we thus evaluate DPNets and a number of baselines by how accurate is an ``oracle'' supervised MNIST classifier (test accuracy for in-distribution $\geq 99\%$) in predicting the correct label $c + t ~({\rm mod}~ 5)$ after $t$ steps of evolution. DPNets consistently retain an accuracy above $95\%$, while for every other method it degrades. The ``reference'' line corresponds to random guessing, while the ``Oracle-Feature'' baseline is an operator regression model (EDMD) using, as the dictionary of functions, the output logits of the oracle, and despite having been trained with the true labels, its performance degrades drastically after $t \geq 5$.

\vspace{-.1truecm}
{\bf Fluid dynamics~} %
We study the classical problem of the transport of a passive scalar field by a 2D fluid flow past a cylinder~\citep{Raissi2020}. 
Each data point comprises a regular 2D grid that encompasses fluid variables, including velocity, pressure, and scalar field concentration at each grid point. 
This system is non-stationary and is also known to exhibit non-normal dynamics~\citep[see][]{TrefethenEmbree2020}. %
We evaluate each trained baseline by feeding to it the last snapshot of the train trajectory and evaluating the relative RMSE (that is, the RMSE normalized by the data variance) between the forecasts and the subsequent test snapshots. In this experiment, we also have a {\em physics-informed} (PINN) baseline not related to transfer operator learning, which is however the model for which this dataset was created in the first place.
Remarkably, the forecasting error of DPNets does not grow sensibly with time as it does for every other method.

\begin{figure}[t!]
  \centering
  \includegraphics[width=\textwidth]{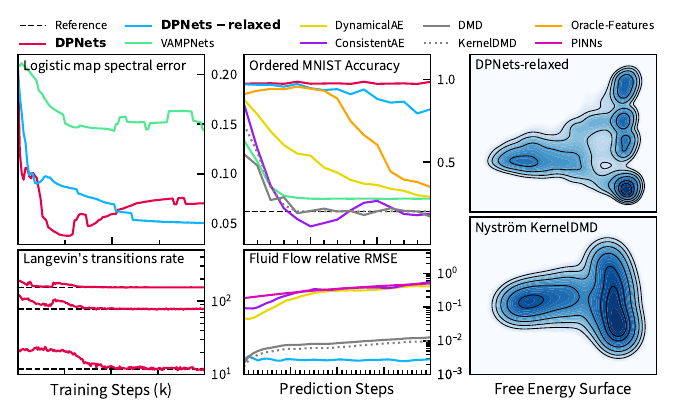}
  \begin{minipage}[t]{0.298\textwidth}
    \vspace{-0.55cm}
    \centering
    \caption{{\em Eigenvalue error decays during training}. Upper: spectral error for the logistic map. Lower: DPNets-estimated eigenvalues of $\LO$ for the Langevin dynamics. }
    \label{fig:eigvals}
  \end{minipage}
  \hspace{0.43cm}
  \begin{minipage}[t]{0.298\textwidth}
    \vspace{-0.55cm}
    \centering
    \caption{{\em Forecasting with DPNets}.~Upper:~classification accuracy over time for ordered MNIST. Lower: forecasting RMSE over time for fluid dynamic example.}
    \label{subfig:forecasts}
  \end{minipage}
  \hspace{0.43cm}
  \begin{minipage}[t]{0.298\textwidth}
    \vspace{-0.58cm}
    \centering
    \caption{Free energy surface of the 2 slowest modes of Chignolin,estimated by DPNets and Nyst\"rom PCR. To be compared with~\citet{Bonati2021}.
    }
    \label{fig:chignolin}
  \end{minipage}
  \vspace{-.9truecm}
\end{figure}

\vspace{-.1truecm}
{\bf The Metastable states of Chignolin~} In our last experiment, we study the dynamics of Chignolin, a folding protein, from a $106 \, \mu{\rm s}$ long molecular dynamics simulation sampled every $200 \, {\rm ps}$, totalling over $500,000$ data points~\citep{LindorffLarsen2011}. We focus on the leading eigenfunctions of the transfer operator, which are known~\citep{Schutte2001} to provide a simple characterization of the slowest (and usually most important) physical processes occurring along the dynamics. From the leading left eigenfunctions of $\ETO$, one can indeed construct the free energy surface (see~Fig.~\ref{fig:chignolin}), whose local minima identify the metastable states of the system. The free energy surface, indeed, is proportional to the negative $\log$-pdf of finding the system in a given state while at thermodynamic equilibrium, meaning that a state of {\em low} free energy is highly probable, hence metastable. For Chignolin, the metastable states have been thoroughly studied~\citep[see e.g.][]{Novelli2022}. The leading left eigenfunction of $\ETO$ encodes the folding-unfolding transition and the typical transition time is the {\em implied timescale}~\citep{Mardt2019} 
associated with its eigenvalue. The difference between the local minima of the free energy surface encodes the enthalpy $\Delta H$ of the transition. In ~Tab.~\ref{tab:chignolin} we compare these quantities to the reference values reported in~\citet{LindorffLarsen2011}. We trained a GNN-based DPNet-relaxed, as both DPNets unrelaxed and VAMPNets failed to converge, possibly due to the large scale of the data. We compared it to a KernelDMD estimator trained with the recent Nystr\"om sketching technique~\citep{Meanti2023} as classical kernel methods are completely intractable at this scale\footnote{A back-of-the-envelope calculation shows that 450 GBs would be needed just to store kernel matrices in single precision.}. Notice that DPNets succeed in finding additional meta-stable states of the system, which match the analysis of \citep{Bonati2021}; see~App.~\ref{app:exp} for more discussion.

\section{Conclusions}
\vspace{-.1truecm}
We propose a %
framework for learning a representation of dynamical systems, based on orthogonal projections in data-spaces. It 
captures a leading invariant subspace of the transfer operator and can be applied to both discrete and continuous dynamical systems. In the discrete case, the representation is learned through the optimization of a smooth and numerically well conditioned objective function. 
Extensive numerical experiments demonstrate the effectiveness and generality of DPNets in various settings, suggesting that they are a promising tool for data-driven dynamical systems. A limitation of this work is that the score functional for the continuous systems might be unstable since it leverages covariance matrix inversion. Moreover, a future direction would be to study the statistical learning properties of the algorithm presented here.

\section*{Acknowledgements}
We acknowledge the financial support from the PNRR MUR Project PE000013 CUP J53C22003010006 "Future Artificial Intelligence Research (FAIR)", funded by the European Union – NextGenerationEU, EU Project ELIAS under grant agreement No. 101120237 and the chair "Business Analytic for Future Banking" sponsored by Natixis-Groupe BPCE.

\bibliographystyle{apalike}
{%
\bibliography{bibliography}
}

\newpage
    
\appendix

\begin{center}
{\Large \bf Appendix} 
\end{center}

The appendix is organized as follows.

\begin{itemize}
    \item  App. \ref{app:backgound} reviews the notation used throughout the paper and basic notions of stochastic processes and reproducing kernel Hilbert spaces.
    \item In App. \ref{app:prob} we present the theoretical background for material presented in Sec.~\ref{sec:prob}.
    \item In App. \ref{app:score} we prove the first main result of the paper, Thm.~\ref{thm:main}. 
    \item App. \ref{app:methods} adds value to Sec.~\ref{sec:methods}. We first give theoretical justification for using the empirical score instead of the population one, and then provide details on how to address downstream tasks. 
    \item In App. \ref{app:SDE} we prove the second main result of the paper, Thm.~\ref{thm:main_cont} about DPNets for continuous time stochastic processes. 
    \item Finally, App. \ref{app:exp} contains more information about the experiments and additional results.
\end{itemize}

\section{Background and notation}\label{app:backgound}

\subsection{Stochastic processes tools}
Let $\spX$ be a state space and $\sigalg$ be the Borel $\sigma$-algebra on $\spX$. Let $\mathcal{M}$ denote the space of all finite measures on $(\spX,\sigalg)$. In what follows, we consider a random process $(X_{t})_{t\in\mathbb{T}}$
with values in a state space $\spX$.

\begin{definition}[Markov process.]
Let $\mathcal{F}_{t} := \Sigma\left(\{X_{s} : 0 \leq s \leq t\}\right)$ be the sigma-algebra generated by the elements of the process up to time $t$. The random process $\mathbf{X}=(X_{t})_{t\in\mathbb{T}}$ is called a Markov process if for all $t, \tau \geq 0$ and $B \in \sigalg$ it holds
\begin{equation}
    \PP \left(X_{t + \tau} \in B \middle|  \mathcal{F}_{t}\right) = \PP \left(X_{t + \tau} \in B \middle| X_{t}\right),
\end{equation}
that is, conditioning on the full history of the process $\mathcal{F}_{t}$ up to time $t$ is equivalent to condition only on the state of the process at time $t$. For this reason, Markov process are sometimes called ``memoryless''.
\end{definition}

\begin{definition}
    The transition density function $p_{\tau}\,:\, \spX\times \spX \rightarrow [0,\infty]$ of a time-homogeneous process $\mathbf{X}$ is defined by
    $$
    \PP\left(  X_{t+\tau} \in B \vert X_t = x\right) = \int_{B} p_{\tau}(x,y)dy,
    $$
    for every measurable set $B\in \sigalg$.
\end{definition}
The distribution of a time-homogeneous stochastic process $(X_t)_t$ with transition density functions $(p_{\tau})_{\tau>0}$ can be described by the semigroup of transfer operators $(\TO_{\tau})_{\tau\geq 0}$ usually defined on  $L^{\infty}(\spX)$.

\begin{definition}[Transfer operator]
For any $\tau\geq 0$, the Koopman transfer operator $\TO_\tau\, :\, L^{\infty}(\spX)\rightarrow L^{\infty}(\spX)$ is defined by
$$
\TO_\tau(f) = \EE\left[ f(X_{t+\tau})\vert X_t = \cdot\right], \quad \forall f\in L^{\infty}(\spX).
$$
\end{definition}

\begin{definition}[Feller process]
    Let $(\TO_{\tau})_{\tau\geq 0}$ be the semigroup of transfer operators of a homogeneous Markov process. Then, $(\TO_{\tau})_{\tau\geq 0}$ is said to be a \emph{Feller} semigroup when the following condition holds:
    \begin{enumerate}
        \item $\TO_{\tau} \left(\mathcal{C}_0(\spX)\right) \subseteq \mathcal{C}_0(\spX)$ for all $\tau \geq 0$.
        \item $\lim_{\tau \to 0} \| \TO_{\tau}f - f \| = 0$ for all $f \in \mathcal{C}_0(\spX)$.
    \end{enumerate}
    Here $\mathcal{C}_0(\spX)$ is the Banach space of all continuous functions vanishing at infinity.
\end{definition}

\begin{definition}[Infinitesimal generator of a semigroup]
    Let $(\TO_{\tau})_{\tau\geq 0}$ be a Feller semigroup. We define its infinitesimal generator $\LO : \mathcal{C}_0(\spX) \to \mathcal{C}_0(\spX)$ 
    $$
    \LO f = \lim_{\tau \rightarrow 0}\frac{1}{\tau}(\TO_{\tau}f - f),
    $$
    for any $f\in \mathcal{C}_0(\spX)$ such that the above limit is well-defined. 
\end{definition}
The above definitions can be lifted to $L^{2}_{\im}(\spX)$ if the dynamical system has an invariant distribution, or if we are interested in the action of $\TO_{\tau}$ on a specific couple of states $X, X^\prime$ of the process, as discussed in the main text.

\subsection{Reproducing kernel Hilbert spaces}\label{app:kernels}

In this section we review few basic concepts on kernel based approaches to learning transfer operators; for more details on 
reproducing kernel Hilbert spaces (RKHS) we refer reader to~\citep{aron1950}.

Let $\kHX \colon \spX\times\spX\to\R$ and $\kHY \colon \spY\times\spY\to\R$ be two bounded kernels,  and let $\spHX$ and $\spHY$ be their respective reproducing kernel Hilbert spaces (RKHS). Denote the canonical feature maps by $\fHX(x):=\kHX(x,\cdot)$, $x\in\spX$, and $\fHY(x'):=\kHY(x',\cdot)$, $x'\in\spY$.

Next, consider two probability measures $\mX$ and $\mY$ on $\spX$, and their associated $L^2$ spaces $\LiiX$ and $\LiiY$. If $\kHX(\cdot,\cdot)$ is square-integrable w.r.t. measure $\mX$ and $\kHY(\cdot,\cdot)$ is square-integrable w.r.t. the measure $\mY$, then we can define injection (or the evaluation) operators $\TSX\colon\spHX \hookrightarrow \LiiX$ and $\TSY\colon\spHY \hookrightarrow \LiiY$. Then, their adjoints   $\TSX^*\colon \LiiX\to  \spHX$ and $\TSY^*\colon \LiiY \to \spHY$ are given by
\[
\TSX^*f = \int_{\spX} f(x)\fHX(x)\mX(dx)\in\spHX\quad\text{ and }\quad \TSY^*f' = \int_{\spY} f'(x')\fHY(x')\mY(dx')\in\spHY,
\]
where $f\in\LiiX$ and $f'\in\LiiY$. 

Using injections and their adjoints one can introduce covariance operators $\Cx\colon \spHX\to\spHX$ and $\Cy\colon \spHY\to\spHY$ by
\[
\Cx :=  \TSX^*\TSX=\EE_{\rvX\sim\mX} \fHX(\rvX)\otimes \fHX(\rvX), \quad \text{ and }\Cy :=  \TSY^*\TSY=\EE_{\rvY\sim\mY} \fHX(\rvY)\otimes \fHX(\rvY), 
\] 
as well as the cross-covariance $\Cxy\colon \spHY\to\spHX$ operator 
\[
\Cxy := \TSX^*\TO \TSY = \EE_{(\rvX,\rvY)\sim\mXY} \fHX(\rvX)\otimes \fHY(\rvY),
\]
where $\mXY$ is the joint measure of $(\rvX,\rvY)$ and $\TO:\LiiY\to\LiiX$ is a transfer operator corresponding to the evolution of $\rvX\sim\mX$ to $\rvY\sim\mY$. 

Different estimators for the problem of learning transfer operators from data $\Data = (x_i,x_i')_{i\in[n]}$ have been considered, see e.g.~\citep{Kostic2022,Li2022}, which are all of the form $\EEstim_W = \ESY^* W \ESX$,  where $W\in\R^{n\times n}$ and $\ESX\colon \spHX\to\R^{n}$ and $\ESY\colon \spHY\to\R^{n}$ are sampling operators given by $\ESX f = n^{-\frac{1}{2}} [ f(x_1)\,\ldots\, f(x_n)]^\top$, $f\in\spHX$, and $\ESY f' = n^{-\frac{1}{2}} [ f'(x_1')\,\ldots\, f'(x_n')]^\top$, $f'\in\spHY$. In particular, kernel methods usually consider \emph{universal kernels} that generate infinite-dimensional RKHS spaces that are dense in $L^2$. In such a way, one can approximate transfer operators via operator (vector-valued) regression arbitrarily well with enough data. However, one important aspect has been recently reported in \citep{Kostic2022, kostic2023koopman}. Namely, from the application perspective, utility of transfer operators is largely relying on the ability to estimate well their eigenvalues and eigenfunctions, which lead to the notion of modal decomposition of observables \citep{Brunton2022}. In that context, the difference in the norms between RKHS and $L^2$ spaces can produce spectral pollution and lead to bad estimation of the eigenvalues and eigenfunctions. We recall that the difference in norms is due to the covariance, i.e. for every $h\in\spH$ its L2 norm is 
\begin{equation}\label{eq:norm_change}
\int_{\spX} |h(x)|^2\mX(dx) = \scalarp{\TSX h, \TSX h}_{\LiiX} = \scalarp{h,\Cx h}_{\spHX}.    
\end{equation}
While this difference in norms is unavoidable for universal kernels (indeed for universal kernels generating infinite-dimensional RKHS spaces, since covariance is a trace class operator, we have $\eval_j(\Cx)\rightarrow 0 $ as $j\rightarrow \infty$ potentially leading to $\scalarp{h,\Cx h}_{\spHX}\ll \scalarp{h,h}_{\spHX}$ for some $h\in \spHX$), for finite dimensional kernels one can hope to avoid metric distortion between spaces by aiming to find features for which $\Cx=I$.

\section{Representation Learning: problem and approach}\label{app:prob}

{In this section we prove some of the statements made in Sec.~\ref{sec:prob}. In particular we show that \eqref{eq:bias_bound} holds true, and show how the objectives \eqref{eq:optimization} and \eqref{eq:optimization_continious} are related to the desired space $\spH$.} 

{We first note that $\TO_{\vert_{\spH}}:=\TO \TS \colon\spH\to\Lii$, while 
$\TO \proj{\spH} = \TO \TS \Cx^\dagger \TS^*$. Hence, 
$\norm{[I-\proj{\spH}]\TO\proj{\spH}}^2 = \norm{[I-\proj{\spH}]\TO_{\vert_{\spH}} \Cx^{\dagger/2}}^2 = \norm{\Cx^{\dagger/2} ([I-\proj{\spH}]\TO_{\vert_{\spH}})^*}^2$. Therefore, using that $\range\big(([I-\proj{\spH}]\TO_{\vert_{\spH}})^*\big)= \range(\TS^* \TO^* [I-\proj{\spH}]) \subseteq\range(\Cx^{\dagger})$ \eqref{eq:bias_bound} follows. }

Next, we show how Eckart-Young's theorem justifies \eqref{eq:optimization}, which is the basis of Thm.~\ref{thm:main}.

\begin{lemma}\label{lm:projections}
Let $\spH_1$ and $\spH_2$ be two separable Hilbert spaces, and $A\colon \spH_1\to\spH_2$ be a compact operator. Let $r\in\N$ and $\mathcal{P}^r_k:=\{ P \colon \spH_k\to\spH_k\,\vert\, P^*= P^2 = P, \rank(P)\leq r \}$ denote the set of rank-$r$ orthogonal projectors in $\spH_k$, $k=1,2$. It holds that
\begin{enumerate}[label=(\roman*)]
\item\label{lm:projections_i} $\displaystyle{\max_{(P_1,P_2)\in\mathcal{P}_1^r \times \mathcal{P}_2^r }\hnorm{P_2AP_1}^2 } = \sum_{i\in[r]}\sigma_i^2(A)$,
\item\label{lm:projections_ii} Let $U_r \Sigma_r V_r^*$ be the $r$-truncated SVD of $A$, then 
$(V_r V_r^*,U_r U_r^*)\in\displaystyle{\argmax_{(P_1,P_2)\in\mathcal{P}_1^r \times \mathcal{P}_2^r }\hnorm{P_2AP_1}^2 }$. 
\item\label{lm:projections_iii} If $\sigma_{r+1}(A)<\sigma_r(A)$ and $\hnorm{A}<\infty$, then $\displaystyle{\argmax_{(P_1,P_2)\in\mathcal{P}_1^r \times \mathcal{P}_2^r }\hnorm{P_2AP_1}^2 }$ is singleton.
\end{enumerate}
\end{lemma}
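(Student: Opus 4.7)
The plan is to reduce the problem to the Eckart-Young-Mirsky / Ky Fan framework for positive compact operators, splitting it into an attainment computation, a matching upper bound, and a rigidity argument for the uniqueness claim.

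First, for part (ii) and the ``$\geq$'' direction of (i), I would verify attainment by direct calculation: by definition of the $r$-truncated SVD one has $U_r^* A V_r = \Sigma_r$, so
\[
(U_r U_r^*)\, A\, (V_r V_r^*) = U_r \Sigma_r V_r^*,
\]
whose Hilbert-Schmidt norm squared equals $\tr(\Sigma_r^2) = \sum_{i\in[r]} \sigma_i^2(A)$. This simultaneously pins down the candidate maximizer in (ii) and gives the lower bound for (i).

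Next, for the matching upper bound, for any $P_1\in\mathcal{P}_1^r$ and $P_2\in\mathcal{P}_2^r$ I would exploit $P_2 \preceq I$ to write
\[
\hnorm{P_2 A P_1}^2 = \tr(P_1 A^* P_2 A P_1) \leq \tr(P_1 A^* A P_1) = \hnorm{A P_1}^2.
\]
Choosing an orthonormal basis $\{e_i\}_{i\in[k]}$ of $\range(P_1)$ with $k\leq r$, one has $\hnorm{A P_1}^2 = \sum_{i\in[k]} \langle e_i, A^* A\, e_i\rangle$, and since $A^* A$ is positive compact with eigenvalues $\sigma_i^2(A)$, Ky Fan's maximum principle yields
\[
\sum_{i\in[k]} \langle e_i, A^* A\, e_i\rangle \leq \sum_{i\in[r]} \sigma_i^2(A),
\]
completing (i).

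For uniqueness in (iii), the gap $\sigma_r(A) > \sigma_{r+1}(A)$ implies $\sigma_r(A) > 0$ and that the top-$r$ spectral subspace of $A^* A$ is uniquely $\range(V_r)$. Equality in Ky Fan's bound therefore forces $\rank(P_1) = r$ (strictly smaller rank would leave out a strictly positive $\sigma_r^2$) and $\range(P_1) = \range(V_r)$, so $P_1 = V_r V_r^*$. With $P_1$ fixed, the intermediate step in the upper bound must also be tight, which rewrites as $\tr\bigl(\Sigma_r^2\,(I_r - U_r^* P_2 U_r)\bigr) = 0$; since $\Sigma_r^2 \succ 0$ and $I_r - U_r^* P_2 U_r \succeq 0$, this forces $U_r^* P_2 U_r = I_r$, i.e.\ $P_2$ acts as the identity on $\range(U_r)$, and then $\rank(P_2) \leq r$ pins $P_2 = U_r U_r^*$. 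The main technical point I expect to handle carefully is the invocation of Ky Fan's principle for a positive compact operator on a separable Hilbert space (rather than a matrix) together with the two distinct rigidity conclusions needed for $P_1$ and $P_2$; the hypothesis $\hnorm{A} < \infty$ is a mild convenience that streamlines the trace manipulations, but the core argument is an infinite-dimensional Eckart-Young-Mirsky reasoning.
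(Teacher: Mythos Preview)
Your proof is correct, but it follows a different route from the paper's. The paper invokes the Eckart--Young--Mirsky theorem directly: for parts (i)--(ii) it bounds $\hnorm{P_2 A_m P_1}^2 \leq \hnorm{A_m}^2 - \hnorm{A_m - P_2 A_m}^2 \leq \hnorm{A_r}^2$ via Pythagoras and Eckart--Young applied to the finite-rank truncation $A_m$, then lets $m\to\infty$ to handle the general compact (not Hilbert--Schmidt) case; for part (iii) it works with $A$ itself (now Hilbert--Schmidt) and uses the uniqueness clause of Eckart--Young to conclude $AP_1 = P_2 A = A_r$, from which the identification of $P_1,P_2$ follows. Your argument instead splits the bound into two steps, $P_2\preceq I$ followed by Ky Fan on $A^*A$, and then reads off rigidity from the equality cases of each step separately. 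Your approach is a bit more direct in that the finite-rank approximation detour is unnecessary (since $AP_1$ is automatically finite-rank, all traces are finite regardless of whether $A$ is Hilbert--Schmidt), and in fact your argument for (iii) does not really use $\hnorm{A}<\infty$ either---as you note, it is only a convenience. The paper's route, on the other hand, stays closer to the low-rank approximation viewpoint that motivates the lemma in the surrounding text, and its uniqueness argument via $AP_1 = P_2A = A_r$ is perhaps more symmetric in $P_1$ and $P_2$ than your two-stage rigidity.
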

\begin{proof}
This lemma is a direct consequence of the Eckart-Young theorem for compact operators~\citep{Kato76}. Recall, since $A$ is compact, there exists its SVD $A=U\Sigma V^*$ and Eckart-Young theorem states that if $A$ is Hilbert-Schmidt, for every $r\in\N$ and every $B\colon \spH_1\to\spH_2$ such that $\rank(B)\leq r$, it holds $\hnorm{A-B}\geq \hnorm{A-A_r}$, where $A_r:= U_r \Sigma_r V_r^*$ denotes the $r$-truncated SVD of A. Moreover,  if $\sigma_{r+1}(A)<\sigma_r(A)$, then equality implies $B=A_r$.

Hence, for arbitrary $P_k\in\mathcal{P}_k^r$, $k=1,2$, using that the norm of a projection is 1 and Pythagoras theorem, for every $m\geq r$ we have that
\begin{align*}
\hnorm{P_2 A_m P_1 }^2 & \leq \hnorm{P_2 A_m}^2 \norm{P_1}^2 = \hnorm{P_2 A_m}^2 =  \hnorm{A_m}^2 - \hnorm{[\Id - P_2]A_m}^2 \\
& = \hnorm{A_m}^2 - \hnorm{A_m-P_2 A_m}^2 \leq \hnorm{A_m}^2 - \hnorm{A_m-A_r}^2  = \hnorm{A_r}^2, 
\end{align*}
and, similarly, $\hnorm{P_2 A_m P_1 }^2 \leq \hnorm{A_m}^2 - \hnorm{A_m- A_m P_1}^2 \leq \hnorm{A_r}^2$.
However, since
\begin{align*}
\big\lvert \hnorm{P_2 A P_1} - \hnorm{P_2 A_m P_1} \big\rvert & \leq \hnorm{P_2 (A-A_m) P_1} \leq \sqrt{r} \norm{P_2 (A-A_m) P_1} \\
& \leq \sqrt{r} \norm{A-A_m} \leq \sqrt{r} \sigma_{m+1}(A),
\end{align*} 
we obtain $\hnorm{P_2 A P_1 }^2 \leq \hnorm{A_r}^2 + \sqrt{r} \sigma_{m+1}(A)$ for all $m\geq r$. Thus, since $\hnorm{P_2 A P_1 }^2 = \hnorm{A_r}^2$ obviously holds for  $P_1 = V_r V_r^*$ and $P_2 = U_r U_r^*$, letting $m\to\infty$ we obtain \ref{lm:projections_i} and \ref{lm:projections_ii}. 

Finally, assume that $A$ is an Hilbert-Schmidt operator such that $0\leq\sigma_{r+1}(A) <\sigma_r(A)$. Then, similarly to the above now working with $A$ instead of $A_m$, 
\[
\hnorm{P_2 A P_1 }^2 \leq \hnorm{A}^2 - \max\{\hnorm{A- A P_1}^2, \hnorm{A- P_2 A}^2\} \leq \hnorm{A_r}^2.
\]
So, if 
\[
(P_1,P_2)\in\displaystyle{\argmax_{(P_1',P_2')\in\mathcal{P}_1^r \times \mathcal{P}_2^r }\hnorm{P_2'AP_1'}^2 },
\]
then 
\[
\max\{\hnorm{A- A P_1}^2, \hnorm{A- P_2 A}^2\} = \hnorm{A-A_r}^2.
\]
Now, the uniqueness result of the Eckart-Young theorem implies that $A P_1 = P_2 A = A_r$, and, consequently $\rank(P_1) =\rank(P_2) = r$, $A_r^\dagger A P_1 = A_r^\dagger A_r$ and $P_2 A A_r^\dagger  = A_r A_r^\dagger$. Therefore, $V_r V_r^* P_1 = V_r V_r^*$ and $P_2 U_r U_r^* = U_r U_r^*$, imply $\range(V_r)\subseteq \range(P_1)$ and  $\range(U_r){\subseteq} \range(P_2)$. But since $P_1$ and $P_2$ have exactly rank $r$, we obtain $P_1 = V_r V_r^*$ and $P_2 = U_r U_r^*$.
\end{proof}

We also discuss how to extend part of the analysis to the important setting of stationary deterministic systems for which the transfer operator is not compact.
\begin{remark}\label{rem:deterministic} 
The compactness assumption on the the transfer operator, does not hold for purely deterministic dynamical systems. However, our 
approach 
is still applicable to the study of deterministic systems, that is when $X_{t+1} = F(X_t)$,
for a deterministic map $F\colon \spX\to \spX$, and $X_t\sim\im$ for all $t\in\mathbb{T}$, where $\im$ is the invariant measure defined on the attractor. Then, we have that $\EE[ f(X_{t+1})\,\vert\, X_t] = F\circ f$ and $\TO\colon\Lii\to\Lii$ is unitary, see e.g.~\citep{Brunton2022}. Thus, $\TO$ is not a compact operator, however it is a normal one.
But then, Pythagoras theorem gives that
\[
\inf_{\param\in\spParam}\hnorm{[\Id-\proj{\spHXp}]\TO\proj{\spHXp}]}^2 = \inf_{\param\in\spParam} \left(\hnorm{\TO\proj{\spHXp}}^2 - \hnorm{\proj{\spHXp}\TO\proj{\spHXp}}^2\right) = r - \sup_{\param\in\spParam} \score(\param), 
\]  
where the second equality holds since the HS-norm is unitarily invariant. Therefore, when $w$ is  such that $\TO\embXp{,j} = \eval_j \embXp{,j}$, since $\TO$ is unitary we have $\score(\param) = \sum_{j=1}^{r} \abs{\lambda_j}^2 = r$
and the above inf is zero. Consequently,  we have identified an $r$-dimensional invariant subspace of $\TO$.
\end{remark}

Finally, we conclude this section with the lemma that is the basis of generator learning via optimization problem \eqref{eq:optimization_continious} whose objective is defined through a partial trace. 

\begin{lemma}\label{lm:partial_trace}
Let $\spH$ be a separable Hilbert space and let $\spH_r\subseteq \spH$ be a (finite) $r$-dimensional subspace. If $A\colon \spH\to\spH$ is a self-adjoint operator having at least $r+1$ eigenvalues $\eval_1(A)\geq\eval_2(A)\geq\ldots\geq \eval_r(A)\geq\eval_{r+1}(A) $ above its essential spectrum, then
\[
\tr(A \proj{\spH_r}) \leq \eval_1(A)+\eval_2(A)+\cdots+\eval_r(A),
\]
and equality holds when $\spH_r$ is spanned by eigenfunctions of $A$ corresponding to eigenvalues $\eval_1(A),\ldots, \eval_r(A)$.
\end{lemma}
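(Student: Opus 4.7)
\textbf{Proof plan for Lemma~\ref{lm:partial_trace}.} The plan is to reduce the statement to Ky Fan's maximum principle for sums of the top eigenvalues of a self-adjoint operator above its essential spectrum, which in turn follows from the Courant--Fischer min--max characterization.

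First I would fix an orthonormal basis $(e_1,\ldots,e_r)$ of $\spH_r$ and write
\[
\proj{\spH_r} \;=\; \sum_{i=1}^{r} e_i \otimes e_i,
\qquad
\tr\bigl(A\,\proj{\spH_r}\bigr) \;=\; \sum_{i=1}^{r} \scalarp{e_i,\,A e_i}_{\spH}.
\]
Note that since $\dim \spH_r = r<\infty$ and (by hypothesis) $\spH_r \subseteq \mathrm{dom}(A)$ for the displayed expression to make sense, the operator $A\,\proj{\spH_r}$ is of finite rank, so its trace is well defined and independent of the chosen orthonormal basis.

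Next, the inequality $\sum_{i=1}^r \scalarp{e_i,Ae_i}_\spH \le \sum_{i=1}^r \eval_i(A)$ is exactly Ky Fan's maximum principle. I would invoke it as follows. Because $A$ has at least $r+1$ eigenvalues strictly above $\sup\sigma_{\mathrm{ess}}(A)$, each of $\eval_1(A),\ldots,\eval_r(A)$ is an isolated eigenvalue of finite multiplicity, and the Courant--Fischer min--max formula applies:
\[
\eval_k(A) \;=\; \min_{\substack{V\subseteq\spH\\ \dim V = k-1}}\;\max_{\substack{f\,\perp\,V\\ \|f\|=1}}\; \scalarp{f,Af}_\spH,\qquad k\in[r].
\]
Summing these characterizations, and testing them against the flag $V_{k-1} = \Span(e_1,\ldots,e_{k-1})$ built from our basis of $\spH_r$, yields
\[
\sum_{i=1}^{r}\scalarp{e_i,Ae_i}_\spH \;\le\; \sum_{k=1}^{r} \eval_k(A),
\]
which is the desired bound. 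Combining with the identity above proves the inequality in the statement.

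For the equality case, if $(f_1,\ldots,f_r)$ are orthonormal eigenfunctions satisfying $A f_i = \eval_i(A) f_i$, then taking $\spH_r := \Span(f_i)_{i\in[r]}$ and $e_i := f_i$ gives $\scalarp{e_i,Ae_i}_\spH = \eval_i(A)$, so the trace sum saturates the bound.

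The main obstacle here is really the usual technicality behind Ky Fan in the unbounded self-adjoint setting: one must be sure the min--max principle genuinely selects the discrete eigenvalues $\eval_1(A)\ge\ldots\ge \eval_r(A)$ and not points of $\sigma_{\mathrm{ess}}(A)$. This is precisely why the hypothesis demands $r+1$ eigenvalues above the essential spectrum, ensuring that $\eval_r(A)>\sup\sigma_{\mathrm{ess}}(A)$ and the variational formulae yield bona fide isolated eigenvalues of finite multiplicity (see, e.g., the min--max theorem as stated in Reed--Simon Vol.~IV). Modulo this classical fact, the rest of the argument is a direct bookkeeping exercise.
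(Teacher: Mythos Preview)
Your overall strategy---recognizing the lemma as Ky Fan's maximum principle---is sound, but the derivation you sketch from Courant--Fischer has a gap. Plugging the flag $V_{k-1}=\Span(e_1,\ldots,e_{k-1})$ into the min--max formula yields $\eval_k(A)\le\max_{f\perp V_{k-1},\,\norm{f}=1}\scalarp{f,Af}$, which is an \emph{upper} bound on $\eval_k(A)$; it does not bound $\scalarp{e_k,Ae_k}$ from above by $\eval_k(A)$. Indeed the individual inequality $\scalarp{e_k,Ae_k}\le\eval_k(A)$ is false in general (take $A=\Diag(2,1,0)$, $e_1=(0,0,1)$, $e_2=(1,0,0)$: then $\scalarp{e_2,Ae_2}=2>1=\eval_2(A)$), so summing flag-tested min--max identities cannot produce the desired bound. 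The correct variational route is Cauchy interlacing: letting $\mu_1\ge\cdots\ge\mu_r$ be the eigenvalues of the compression $\proj{\spH_r}A\proj{\spH_r}$ restricted to $\spH_r$, the max--min form gives $\mu_k=\max_{W\subseteq\spH_r,\,\dim W=k}\min_{f\in W,\,\norm{f}=1}\scalarp{f,Af}\le\eval_k(A)$ (the constrained maximum is dominated by the unconstrained one), and summing yields $\tr(A\proj{\spH_r})=\sum_k\mu_k\le\sum_k\eval_k(A)$. Alternatively, simply cite Ky Fan as a classical result.

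For comparison, the paper does not invoke Ky Fan but proves the bound from scratch by induction on $r$: given $\spH_m$, it constructs an $(m{-}1)$-dimensional subspace $\spH'\subseteq\spH_m$ orthogonal to the top eigenvector $v_1$, deflates $A$ to $A'=A+(\eval_{r+1}-\eval_1)\,v_1\otimes v_1$ (so that $\eval_i(A')=\eval_{i+1}(A)$), applies the inductive hypothesis to $A'$ on $\spH'$, and adds the remaining direction's contribution bounded by $\eval_1(A)$. Once the flag argument is replaced by interlacing (or a direct citation), your route is shorter; the paper's is self-contained and avoids external results.
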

\begin{proof}
First, let $(v_i)_{i\in[r]}$ denote eigenvectors of $A$ corresponding to eigenvalues $\eval_i$, that is $Av_i = \eval_i(A) v_i$, and let $(h_i)_{i\in\N}$ be an ortho-normal basis of $\spH$ such that $(h_i)_{i\in[r]}$ is the ortho-normal basis of $\spH_r$. Then, clearly
\begin{equation}\label{eq:partial_trace}
    \tr(A\proj{\spH_r})= \tr(\proj{\spH_r}A\proj{\spH_r}) = \sum_{i\in\N} \scalarp{h_i,(\proj{\spH_r}A\proj{\spH_r})h_i} = \sum_{i\in[r]} \scalarp{h_i,A h_i},
\end{equation}
which is clearly equal to $\eval_1(A)+\eval_2(A)+\ldots+\eval_r(A)$ whenever $h_i=v_i$, $i\in[r]$.

The upper bound we prove by induction. First, for $r=1$ we have that $\scalarp{h_i,Ah_i}\leq \eval_1(A)$ follows directly from the Courant-Fischer max-min theorem for operators which claims that
\[
\eval_i(A) = \max_{h_1,\ldots,h_i} \min\{\scalarp{h,A h}\,\vert\, h\in\Span(h_j)_{j\in[i]},\norm{h}=1\},\;i\in[r+1]
\]
Next, assuming that \eqref{eq:partial_trace} holds for arbitrary $A$ and $r\leq m-1$, we will prove that it holds for $r=m$.

Start by observing that there exists $\spH'\subseteq\spH_{m}$ such that $\dim(\spH')\leq m-1$ and $\spH'\perp v_1$. Indeed, taking $g_i =\sum_{j\in[m]} b_{ij} h_j $, $i\in[m-1]$, for some $B=[b_{ij}]\in\R^{(m-1) \times m}$, we have that $g_i\perp v_1$ for all $i\in[m-1]$ iff $B \beta = 0$, where the vector $\beta\in\R^{m}$ is given by $\beta_j= \scalarp{h_j,v_1}$. Now, if $\beta=0$, then $\spH_m \perp v_1$ and the claim trivially follows. Otherwise, since $\dim(\Span(\beta)^\perp)=m-1$, there exists a matrix $B\in\R^{(m-1)\times m}$ so that $B\beta =0$. Consequently, the space $\spH'$ spanned by $(g_i)_{i\in[m-1]}$ is such that $\dim(\spH')\leq m-1$ and $\spH'\subseteq\Span(v_1)^\perp$. Without the loss of generality, assume that $(g_i)_{i\in[m-1]}$ are orthonormal basis of $\spH'$ and that $g_{m}$ is such that $(g_i)_{i\in[m]}$ are orthonormal basis of $\spH$.

First, clearly $\scalarp{g_{m},A g_{m}} \leq \eval_1$. On the other hand,  define $A':=A+(\eval_{r+1} - \eval_1)\,v_1\otimes v_1$. This is an operator obtained by deflating (moving) the first eigenvalue into the $(r+1)$-th one. Namely, we have that $\eval_i(A')=\eval_{i+1}(A)$ for $i\in[m-1]$. Moreover, it holds that $A'\proj{\spH'} = A \proj{\spH'}$. Hence, according to the inductive hypothesis, we have
\[
\tr(A\proj{\spH'}) = \tr(A'\proj{\spH'}) \leq \sum_{i\in[m-1]} \eval_i(A') = \sum_{i=2}^{m} \eval_i(A),   
\]
and, consequently, 
\[
\tr(A\proj{\spH}) =  \sum_{i\in[r]} \scalarp{g_i,A g_i} = \tr(A\proj{\spH'}) + \scalarp{g_m,A g_m} \leq  \sum_{i=2}^{m} \eval_i(A) + \eval_1(A) = \sum_{i\in[m]} \eval_i(A),   
\]
which completes the proof.
\end{proof}

\section{Score functional}\label{app:score}

We start by extending problem \eqref{eq:optimization} to the representation learning for stable non-stationary processes $(\rvX_t)_{t\in \N_0}$. Since for the operator regression problem we typically collect data samples uniformly at random along trajectory, we have that $\rvX \sim \mX := \frac{1}{n} \sum_{i\in[n]}\mXt{i-1}$, recalling that $\mXt{t}$ is the law of $\rvX_t$ and $n$ is the sample size. Then, the one step ahead evolution of a random variable $\rvX$ is $\rvY\sim\mY=\frac{1}{n} \sum_{i\in[n]}\mXt{i}$. Since the transfer operator is properly defined as $\TO\colon\LiiY\to\LiiX$, we obtain the following version of \eqref{eq:bias_bound}
\begin{equation}\label{eq:bias_bound_nonstat}
\norm{[\Id-\proj{\spHX}]\TO \proj{\spHX}'}^2 \lambda_{\min}^+(C_{\spHX}') \leq \norm{[\Id-\proj{\spHX}]\TO_{\vert_{\spHX}}}^2 \leq \norm{[\Id-\proj{\spHX}]\TO \proj{\spHX}'}^2 \lambda_{\max}(C_{\spHX}'), 
\end{equation}
where, $\proj{\spHX}\colon \LiiX\to\LiiX$ and $\proj{\spHX}'\colon \LiiY\to\LiiY$ are the orthogonal projections onto $\spH$ as a subspace of $\LiiX$ and $\LiiY$, respectively, and covariance $C_{\spHX}'$ is now w.r.t. measure $\mY$. Now, assuming that the feature map of $\spH$ is  bounded by some constant $c_{\spH}$, we have that 
\[
\norm{C_{\spHX}' - C_{\spHX}}\leq \int_{\spX} \norm{\fHX(x)\otimes \fHX(x)} \abs{\mY-\mX}(dx) \leq 2 c_{\spH}^2 \norm{\mY-\mX}_{TV} = \frac{2 c_{\spH}^2}{n} \norm{\mXt{n}-\mXt{0}}_{TV}.
\]
Thus, if the dynamics is stable, the total variation norm $\norm{\mXt{n}-\mXt{0}}_{TV}$ is bounded w.r.t. $n\to\infty$, and we conclude that for large enough sample size $n$ one can replace $C_{\spH}'$ by $C_{\spH}$ in \eqref{eq:bias_bound_nonstat} to obtain tight approximation of the approximation error $\norm{[\Id-\proj{\spHX}]\TO_{\vert_{\spHX}}}^2$ by controlling $\norm{[\Id-\proj{\spHX}]\TO \proj{\spHX}'}$. This motivates  the optimization problem for non-stationary dynamics
\begin{equation}\label{eq:optimization_nonstat}
\max_{\spHX\subset\LiiX,\spHY\subset\LiiY}\left\{\hnorm{\proj{\spHX}\TO\proj{\spHY}'}^2\;\vert\; C_{\spHX} = C_{\spHY}' = I, \dim(\spHX)\leq r,\,\dim(\spHY)\leq r\right\}.
\end{equation}

Next, we prove \eqref{eq:score}, relating the projection score with the covariance operators.

\begin{lemma}\label{lm:proj_to_cov}
Let $\rvX$ and $\rvY$ be two $\spX$-valued random variables distributed according to probability measures $\mX$ and $\mY$, respectively. Given $\param\in\spParam$ let for every $j\in[r]$ functions $\embXp{,j}\colon\spX\to\R$ and $\embYp{,j}\colon\spX\to\R$ be square integrable w.r.t measures $\mX$ and $\mY$, respectively. If $\proj{\spHXp}\colon \LiiX\to\LiiX$ and $\proj{\spHYp}'\colon\LiiY\to\LiiY$ are orthogonal projections onto subspaces $\spHXp:=\Span(\embXp{,j})_{j\in [r]}$ and $\spHYp:=\Span(\embYp{,j})_{j\in [r]}$, respectively, then
\begin{equation}\label{eq:proj_to_cov}
    \hnorm{\proj{\spHXp}\TO\proj{\spHYp}'}^2=
\hnorm{(\Cxp)^{\dagger/2}\Cxyp (\Cyp)^{\dagger/2}}^2.
\end{equation}
\end{lemma}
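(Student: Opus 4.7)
The plan is to realize both sides of \eqref{eq:proj_to_cov} through the finite-rank ``synthesis'' operators associated with the feature maps, reducing the identity to an algebraic trace computation with pseudo-inverses. Concretely, I introduce $\TSXp\colon\R^r\to\LiiX$ defined by $\TSXp z := \embXp{}(\cdot)^\top z$ and, analogously, $\TSYp\colon\R^r\to\LiiY$. Then a direct calculation shows $(\TSXp)^*\TSXp=\Cxp$, $(\TSYp)^*\TSYp=\Cyp$, and $(\TSXp)^*\TO\TSYp=\Cxyp$ (this last identity uses the tower property $\EE_{\rvX}[\embXp{,i}(\rvX)\EE[\embYp{,j}(\rvY)\vert \rvX]]=\EE[\embXp{,i}(\rvX)\embYp{,j}(\rvY)]$).

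Next, since $\spHXp=\range(\TSXp)$ and $\spHYp=\range(\TSYp)$, the standard formula for the orthogonal projection onto the range of a bounded operator yields
\begin{equation*}
\proj{\spHXp}=\TSXp(\Cxp)^{\dagger}(\TSXp)^*,\qquad \proj{\spHYp}'=\TSYp(\Cyp)^{\dagger}(\TSYp)^*.
\end{equation*}
Substituting these in and collecting middle factors, I obtain
\begin{equation*}
\proj{\spHXp}\TO\proj{\spHYp}' \;=\; \TSXp\,M\,(\TSYp)^*,\qquad M:=(\Cxp)^{\dagger}\Cxyp(\Cyp)^{\dagger}.
\end{equation*}

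The rest is a trace manipulation. Using $\hnorm{AMB}^2=\tr(B^*M^*A^*AMB)$ and cyclicity of the trace,
\begin{equation*}
\hnorm{\proj{\spHXp}\TO\proj{\spHYp}'}^2 = \tr\!\left(M^*\,\Cxp\,M\,\Cyp\right).
\end{equation*}
Expanding $M$, using self-adjointness of $\Cxp,\Cyp$, and the pseudo-inverse identity $A^{\dagger}AA^{\dagger}=A^{\dagger}$ twice (valid here since $\range(\Cxyp)\subseteq\range(\Cxp)$ and $\range((\Cxyp)^*)\subseteq\range(\Cyp)$, which follows from the factorized form of $\Cxyp$ through the feature maps), this trace collapses to $\tr\!\big((\Cxyp)^*(\Cxp)^{\dagger}\Cxyp(\Cyp)^{\dagger}\big)$. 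On the other side, a single application of cyclicity gives
\begin{equation*}
\hnorm{(\Cxp)^{\dagger/2}\Cxyp(\Cyp)^{\dagger/2}}^2 = \tr\!\big((\Cyp)^{\dagger/2}(\Cxyp)^*(\Cxp)^{\dagger}\Cxyp(\Cyp)^{\dagger/2}\big) = \tr\!\big((\Cxyp)^*(\Cxp)^{\dagger}\Cxyp(\Cyp)^{\dagger}\big),
\end{equation*}
where the last equality collapses $(\Cyp)^{\dagger/2}(\Cyp)^{\dagger/2}=(\Cyp)^{\dagger}$. The two expressions agree, proving the lemma.

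The only non-routine point is the careful bookkeeping of pseudo-inverses on possibly rank-deficient covariances: one must justify that the range inclusions above hold so that identities like $A^{\dagger}AA^{\dagger}=A^{\dagger}$ can be invoked to absorb the extra factors of $\Cxp$ and $\Cyp$. This is the main (and essentially only) subtlety; everything else is linear algebra that, thanks to the embedding operator viewpoint, is completely mechanical.
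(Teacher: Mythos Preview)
Your proof is correct. It shares the opening move with the paper—expressing the projections through synthesis/injection operators attached to the feature maps so that the covariance and cross-covariance matrices appear—but then diverges in execution. The paper stays at the operator level: it introduces an RKHS layer with canonical feature maps and partial isometries $Q,Q'\colon\R^r\to\spHXp,\spHYp$, applies the polar decomposition to the injections $\TSX,\TSY$, and arrives at an explicit factorization
\[
\proj{\spHXp}\TO\proj{\spHYp}'=UQ\,(\Cxp)^{\dagger/2}\Cxyp(\Cyp)^{\dagger/2}\,(Q')^*(U')^*
\]
with $U,U'$ partial isometries, from which the HS-norm identity follows by unitary invariance. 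Your route bypasses the RKHS formalism and the polar decomposition entirely, computing the squared HS norm as a trace and collapsing via cyclicity and the Moore--Penrose identity. This is more elementary and arguably cleaner for the lemma itself; the paper's approach, on the other hand, yields a stronger intermediate \emph{operator} identity that it reuses later (e.g., in the proof of Theorem~\ref{thm:main_cont}).

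One small remark: your parenthetical about needing the range inclusions $\range(\Cxyp)\subseteq\range(\Cxp)$ and $\range((\Cxyp)^*)\subseteq\range(\Cyp)$ is not actually required. The Moore--Penrose axiom $A^\dagger AA^\dagger=A^\dagger$ holds unconditionally, and by cycling the trace so that $(\Cyp)^\dagger\Cyp$ sits adjacent to the remaining $(\Cyp)^\dagger$ you can collapse without invoking any range condition. The inclusions are true (they do follow from the factorization through the feature maps), but you do not need them here.
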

\begin{proof}
The proof directly follows from the notion of finite-dimensional RKHS. 
Let $\kHXp \colon \spX\times\spX\to\R$ and $\kHYp \colon \spY\times\spY\to\R$ be two kernels given by 
\[
\kHXp(x,y):= \embXp{}(x)^\top \embXp{}(y)\quad\text{and}\quad \kHYp(x,y):= \embYp{}(x)^\top \embYp{}(y), \;x,y\in\spX.
\]
Then $\spHXp$ and $\spHYp$ are the reproducing kernel Hilbert spaces (RKHS) associated with kernels $\kHXp$ and $\kHYp$, respectively. 

Now, due to square integrability of the embeddings $\embXp{,j}$ and $\embYp{,j}$, $j\in[r]$, we have that the injection operators of the two RKHS spaces into their respective $L^2$ spaces are well-defined: $\TSX\colon\spHXp \hookrightarrow \LiiX$ and $\TSY\colon\spHYp \hookrightarrow \LiiY$.
Moreover, observing that $\spHXp$ and $\spHYp$ are isometrically isomorphic to $\R^r$, we have that $\TSX^*\TSX \colon\spHXp\to\spHXp$ and $\TSY^*\TSY \colon\spHYp\to\spHYp$ can be identified with $\Cxp\in\R^{r\times r}$ and $\Cyp\in\R^{r\times r}$, respectively, that is
\[
\TSX^*\TSX = Q \Cxp Q^*\quad\text{ and }\quad \TSY^*\TSY = Q' \Cyp (Q')^*.
\]
where $Q\colon\R^{r}\to\spHXp$ and $Q'\colon\R^{r}\to\spHYp$ are partial isometries, meaning that $QQ^*$ and $Q'(Q')^*$ are identity operators on $\spHXp$ and $Q^*Q$ and $(Q')^*Q'$ are identity matrices in $\R^{r\times r}$.

As a consequence, the polar decompositions of finite rank operators $\TSX$ and $\TSY$ can be written as
\begin{equation}\label{eq:polar}
\TSX = U Q (\Cxp)^{1/2}Q^*\quad\text{ and }\quad \TSY = U'Q' (\Cyp)^{1/2}(Q')^*,    
\end{equation}
where $U\colon\spHXp\to\LiiX$ and $U'\colon\spHYp\to\LiiY$ are partial isometries. 

But then, since $\spHXp$ as a subspace of $\LiiX$ is identified as $\range(\TSX)$, and $\spHYp$ as a subspace of $\LiiY$ is identified as $\range(\TSY)$, using adjoints  $\TSX^*\colon \LiiX\to  \spHXp$ and $\TSY^*\colon \LiiY \to \spHYp$, we can write the aforementioned orthogonal projections as
\begin{equation}\label{eq:projisomet}
\proj{\spHXp} = \TSX (\TSX^*\TSX)^\dagger \TSX^*  \quad\text{ and }\quad\proj{\spHYp} = \TSY (\TSY^*\TSY)^\dagger \TSY^*,
\end{equation}
which, using \eqref{eq:polar} and the fact that $(\Cxp)^{1/2}(\Cxp)^\dagger = (\Cxp)^{\dagger/2}$, is equivalent to
\[
\proj{\spHXp} =  U Q (\Cxp)^{\dagger/2}Q^*\TSX^*  \quad\text{ and }\quad\proj{\spHYp} = U' Q' (\Cyp)^{\dagger/2}(Q')^*\TSY^*.
\]

Finally, since for every $v\in\R^r$ we obtain
\begin{align*}
(Q^*\TSX^* \TO \TSY Q') v & = Q^* \TSX^* (\TO \TSY Q'v) = Q^* \int_{\spX}\mX(dx)  \fHXp(x) (\TO \TSY Q'v)(x) \\
&=Q^* \int_{\spX}\int_{\spX} \mX(dx)p(x,dx') \fHXp(x) (Q'v)(x')\\
&=Q^*\int_{\spX\times\spX} \jim(dx,dx') \fHXp(x) (Q'v)(x')
\end{align*}
where $\jim$ is joint measure of $(\rvX,\rvY)$ and $\fHXp$ is the canonical feature map of $\kHXp$, that is $\fHXp(x)=\kHXp(\cdot,x)$, $x\in\spX$. Next, using the reproducing property 
\begin{align*}
(Q^*\TSX^* \TO \TSY Q') v & =Q^*\! \int_{\spX\times\spX} \jim(dx,dx') \fHXp(x) \scalarp{\fHYp(x'),Q'v}_{\spHYp} \\
& = \left[ \int_{\spX\times\spX} \jim(dx,dx')Q^*\fHXp(x) \otimes ((Q')^*\fHYp(x')) \right]v\\
& = \left[ \int_{\spX\times\spX} \jim(dx,dx')\embXp{}(x) \otimes \embYp{}(x') \right] v = \Cxyp v,
\end{align*}
where $\fHYp$ is the canonical feature map of $\kHYp$ and we used that $Q^*\fHXp(x)=\embXp{}(x)$ and $(Q')^*\fHYp(x')=\embYp{}(x')$.

Therefore, using \eqref{eq:projisomet} we obtain
\[
\proj{\spHXp}\TO\proj{\spHYp}' =
U Q (\Cxp)^{\dagger/2}\Cxyp (\Cyp)^{\dagger/2} (Q')^* (U')^*,
\]
which using that $U,U'$ and $Q,Q'$ are partial isometries, implies \eqref{eq:proj_to_cov}.
\end{proof}

Now, using Lem.~\ref{lm:projections}, we prove one of the main theoretical results of the paper. From this point forward, we abbreviate $\score(w):=\score^0(w)$, $\rscore(w):=\rscore^0(w)$, $\escore_n(w):=\score^0_n(w)$ and $\erscore_n(w):=\score^0_n(w)$.

\thmScore*
\begin{proof}
Recall that  $\score^\reg(\param)= \score(\param) - \reg \left(\Reg(\Cxp) + \Reg(\Cyp)\right)$ with $\score(\param) = \hnorm{\proj{\spHXp}\TO\proj{\spHYp}'}^2=
\hnorm{(\Cxp)^{\dagger/2}\Cxyp (\Cyp)^{\dagger/2}}^2$, where the last equality follows from \Cref{lm:proj_to_cov}, and that $\rscore(\param):= \hnorm{\Cxyp}^2/\big(\norm{\Cxp}\norm{\Cyp}\big)$. 
The first inequality in \eqref{eq:score_bound}%
holds thanks to a standard matrix norm inequality, while the second holds by applying \Cref{lm:projections}\ref{lm:projections_i} and noting that $\Reg(\Cxp) + \Reg(\Cyp) \geq 0$.

Now assume that $(\embXp{,j})_{j\in[r]}$ and $(\embYp{,j})_{j\in[r]}$ are some leading $r$ left and right singular functions of $\TO$, respectively. Then, since singular functions form ortho-normal systems in $L^2$ spaces we have that 
\[
(\Cxp)_{i,j} = \scalarp{\embXp{,i},\embXp{,j}}_{\LiiX} = \delta_{i,j}\;\text{ and }\; (\Cyp)_{i,j} = \scalarp{\embYp{,i},\embYp{,j}}_{\LiiX} = \delta_{i,j},\,i,j\in[r],
\]
that is $\Cxp=\Cyp = \Id_{r}$, and, therefore, $\Reg(\Cxp)= \Reg(\Cyp) = 0$ and $\rscore(\param) = \score(\param)$. Thus, using \Cref{lm:projections}\ref{lm:projections_ii}, we obtain that \eqref{eq:score_bound} holds with equalities in place of inequalities.

Next, assume that the operator $\TO$ is Hilbert-Schmidt, $\sval_{r}(\TO)>\sval_{r+1}(\TO)$, $\reg>0$ and $\rscore^\reg(\param) = \sval_1^2(\TO)+\ldots+\sval_r^2(\TO)$. Then, clearly $\Reg(\Cxp)= \Reg(\Cyp) = 0$, which implies that $\Cxp=\Cyp=\Id_r$, that is $(\embXp{,j})_{j\in[r]}$ and $(\embYp{,j})_{j\in[r]}$ form orthonormal systems.
Consequently, 
\[
\proj{\spHXp} = \sum_{j\in[r]} \embXp{,j} \otimes \embXp{,j}\quad \text{ and }\quad \proj{\spHYp} = \sum_{j\in[r]} \embYp{,j}\otimes \embYp{,j},
\]
and
\[
\score(\param) = \score^\reg(\param) = \rscore^\reg(\param) = \sval_1^2(\TO)+\ldots+\sval_r^2(\TO).
\]
But then, \Cref{lm:projections}\ref{lm:projections_iii} implies that $\proj{\spHXp}$ and $\proj{\spHYp}$ are orthogonal projectors on the leading $r$ left and right singular spaces of $\TO$. In other words, up to unitary changes of basis, $(\embXp{,j})_{j\in[r]}$ and $(\embYp{,j})_{j\in[r]}$ are the leading $r$ left and right singular functions of $\TO$, which completes the proof.
\end{proof}

We conclude this section with a remark on the importance of regularization. 

\begin{remark}\label{rem:regularization}
Recalling the proof of the previous theorem, note that when $\reg=0$, equality in \eqref{eq:score_bound} is achieved whenever $\spHXp$ and $\spHYp$ are \emph{spanned} by leading $r$ left and right singular functions of $\TO$, respectively. Meaning that after the change of basis $Q,Q'\in\R^{r\times r}$ so that $(\sum_{i\in[r]}Q_{i,j}\embXp{,i})_{j\in[r]}$ and $(\sum_{i\in[r]}Q'_{i,j}\embYp{,i})_{j\in[r]}$ are the leading left and right singular functions, respectively. Under the additional conditions, also the "only if" part holds for some changes of basis. Indeed, we can take the change of basis to be $Q = (\Cxp)^{-1/2}$ and  $Q' = (\Cyp)^{-1/2}$, which without regularization term, need not be unitary. This, as we see in App. \ref{app:prediction}, highly impacts on the stability of computation of the transfer operator estimators, and, therefore, impacts their practical use. 
\end{remark}

\section{Methods}\label{app:methods}

\subsection{Statistical learning guarantees}
\label{app:concScore}

Before we study the statistical learning guarantees for our novel score $\rscore$, we first discuss a fundamental limitation of using the score $\score$ to learn the invariant representation. In order to maximize the score $\mathcal{P}$ one can use standard ridge regularization on the empirical covariances. This approach considered in DeepCCA \citep{andrew2013deep} and VAMPNets \citep{Mardt2019}, typically requires a large number of training samples $n$, and the rates are governed by the choice of the regularization hyperparameter, typically ranging from $O(n^{-1/3})$ to $O(n^{-1/2})$. Namely, in this approach one uses score $\hnorm{( \ECxp+\lambda I)^{-1/2}\ECxyp( \ECyp+\lambda I)^{-1/2}}$  instead of $\score$, where $\lambda$ is the regularization parameter. So the main concern is now to measure how close the regularized empirical score is to the true score? To do this, we first study the operator norm deviation:
\begin{align*}
&\left| \norm{(\Cxp)^{\dagger/2}\Cxyp (\Cyp)^{\dagger/2}} -   \norm{( \ECxp+\lambda I)^{-1/2}\ECxyp( \ECyp+\lambda I)^{-1/2}} \right|\\
&\hspace{0.5cm}\leq \norm{ (\Cxp)^{\dagger/2}\Cxyp (\Cyp)^{\dagger/2} -   (\ECxp+\lambda I)^{-1/2}\ECxyp( \ECyp+\lambda I)^{-1/2}}\\
&\hspace{0.5cm}\leq \norm{ (\Cxp)^{\dagger/2}\Cxyp (\Cyp)^{\dagger/2} -   (\Cxp+\lambda I)^{-1/2}\Cxyp( \Cyp+\lambda I)^{-1/2}}\\
&\hspace{1.5cm}+\norm{(\Cxp+\lambda I)^{-1/2}\Cxyp( \Cyp+\lambda I)^{-1/2}-(\ECxp+\lambda I)^{-1/2}\ECxyp( \ECyp+\lambda I)^{-1/2}}.
\end{align*}
Using Lemma 4 in \citep{ullah2023}, we get the following control on the "bias term" of the previous display:
\begin{align*}
     \norm{ (\Cxp)^{\dagger/2}\Cxyp (\Cyp)^{\dagger/2} -   (\Cxp+\lambda I)^{-1/2}\Cxyp( \Cyp+\lambda I)^{-1/2}}\leq 8 \sqrt{\lambda}.
\end{align*}
Next, using Lemma 6 in \citep{fukumizu2007} gives the following asymptotic rate of convergence on the "variance" part. Assume that $\lambda = \lambda_n \rightarrow 0$ as $n\rightarrow \infty$, then
\begin{align*}
 \norm{(\Cxp+\lambda I)^{-1/2}\Cxyp( \Cyp+\lambda I)^{-1/2}-(\ECxp+\lambda I)^{-1/2}\ECxyp( \ECyp+\lambda I)^{-1/2}} = O_{\PP}\left(\frac{1}{\sqrt{\lambda^3 \, n}} \right).
\end{align*}
Combining the last three displays and for the optimal choice $\lambda=\lambda_n= n^{-1/4}$, we obtain
\begin{align*}
\left| \norm{(\Cxp)^{\dagger/2}\Cxyp (\Cyp)^{\dagger/2}} -   \norm{( \ECxp+\lambda I)^{-1/2}\ECxyp( \ECyp+\lambda I)^{-1/2}} \right| = O_{\PP}(n^{-1/3}).
\end{align*}
According again to \citep{ullah2023}, in the most favorable scenario of finite-dimensional spaces with well-conditioned covariance matrices, we can obtain an improved control on the "variance" part of the order of magnitude $O_{\PP}(\lambda^{-1/2} n^{-1/2})$. Consequently, taking $\lambda=\lambda_n \asymp n^{-1/2} \rightarrow 0$ as $n\rightarrow \infty$, the estimation error improves from $O_{\PP}(n^{-1/3})$ to $O_{\PP}(n^{-1/2})$ in the best-case scenario. While these guarantees were obtained for the spectral norm, one can directly deduce such guarantees also for the HS-norm when the latent space is low-dimensional. This analysis highlights a fundamental limitation of using the score $\score(w)$. On the one hand, the Ridge regularization parameter $\lambda$ cannot be set too small to maintain the numerical stability of the method. On the other hand, the statistical analysis requires that $\lambda=\lambda_n$ converges sufficiently fast to $0$ as $n \rightarrow \infty$ to guarantees that the empirical score $\escore$ approximate the true objective $\score$. These two constraints are antagonistic and it is not clear that they can be simultaneously satisfied in practice. Hence, balancing between numerical stability and convergence to the true objective presents a challenging trade-off when using $\score$ for learning invariant representations. By contrast, our score $\rscore$ and its relaxed version do not suffer from this limitation. Namely, there is no need to sacrifice numerical stability for statistical accuracy or vice versa since $|\rscore(w) - \erscore(w)| = O_{\PP}(n^{-1/2})$ in all situations, as we prove it in Theorem \ref{thm:Shat-S} below.

Our goal is to derive  concentration guarantees for the empirical score $\erscore$ from the true score $\rscore$. We focus on time-homogeneous Markovian dynamical systems in the stationary regime with invariant measure $\im$ which was proposed in \citep{Lasota1994,Kostic2022}.

Recall the definition of the true and empirical scores
\begin{equation}
\rscore(\param):= \frac{\hnorm{\Cxyp}^2}{ \norm{\Cxp}\norm{\Cyp}}\quad \text{and}\quad 
\erscore(\param):= \frac{\hnorm{\ECxyp}^2}{ \norm{\ECxp}\norm{\ECyp}},
\end{equation} 
where, as before,
\[
\ECxp \!:=\! \tfrac{1}{n} 
{\sum_{i\in[n]}} \embXp{}(x_i) \embXp{}(x_i)^\top\!, \ECyp \!:=\! \tfrac{1}{n} 
{\sum_{i\in[n]}} \embYp{}(x'_i) \embYp{}(x'_i)^\top
\text{ and }\,
\ECxyp\! :=\! \tfrac{1}{n} 
{\sum_{i\in[n]}} \embXp{}(x_i) \embYp{}(x'_i)^\top\!\!.
\]
Denote by $\rho$ the joint distribution of $(X,X')$.

We assume that the embeddings are bounded almost surely, that is there exists an absolute constant $c$ such that
\begin{align}\label{eq:cond_bounded}
    \esssup_{x\sim\mu}\|\embXp{}(x)\|^2 \leq c, \qquad  \esssup_{x'\sim\mu'}\|\embYp{}(x')\|^2\leq c.
\end{align}
For any fixed $\param$ and any fixed $\delta\in (0,1)$, we assume that $n$ is large enough such that
\begin{align}\label{eq:cond_sample_size}
    \frac{4c}{3\,\norm{ \Cxp }\,n} \log\left(12r\delta^{-1}\right) + \sqrt{\frac{2}{\norm{\Cxp}\,n}\log(12r\delta^{-1})}< \frac{1}{3}.
\end{align}
Define
$$
\rate_n(\delta)= 
 \frac{4c}{3\,\norm{ \Cxp }\,n} \log\left(12r\delta^{-1}\right) + \sqrt{\frac{2}{\norm{\Cxp}\,n}\log(12r\delta^{-1})},
$$
and
$$
\rate''_n(\delta):= c^2\sqrt{\frac{5 \log(18 \delta^{-1})}{n}} + c^2\frac{C}{n},
$$
where $C$ is some absolute constant.

\begin{theorem}
\label{thm:Shat-S}
Let Conditions \eqref{eq:cond_bounded} and \eqref{eq:cond_sample_size} be satisfied. Then we get with probability at least $1-\delta$
\begin{align}
    \left\vert \rscore(\param) - \erscore(\param)   \right\vert \leq  \rscore(\param)\,\frac{3\rate_n(\delta)}{1-3\rate_n(\delta)}+ \frac{\rate''_n(\delta)}{\norm{\Cxp}\norm{\Cyp}(1-3\rate_n(\delta))}.
\end{align}    
\end{theorem}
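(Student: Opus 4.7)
The plan is to exploit the quotient structure of the two scores. Writing $a := \hnorm{\Cxyp}^2$, $b := \norm{\Cxp}\,\norm{\Cyp}$ and letting $\hat a,\hat b$ be their empirical counterparts, the elementary identity
\[
\frac{a}{b}-\frac{\hat a}{\hat b}=\frac{a}{b}\cdot\frac{\hat b-b}{\hat b}-\frac{\hat a-a}{\hat b}
\]
gives $|\rscore(\param)-\erscore(\param)|\leq \rscore(\param)\,|\hat b-b|/\hat b+|\hat a-a|/\hat b$, which reduces the theorem to (a) a multiplicative control of $\hat b$ and (b) an additive control of $\hat a-a$, merged at the end by a union bound.

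For (a), I would apply a matrix Bernstein inequality to the i.i.d.\ sums $\ECxp-\Cxp$ and $\ECyp-\Cyp$. Each summand $\embXp{}(x_i)\embXp{}(x_i)^\top-\Cxp$ has operator norm bounded by $2c$ thanks to \eqref{eq:cond_bounded}, and matrix variance $\norm{\EE\bigl[(\embXp{}(X)\embXp{}(X)^\top)^2\bigr]}\leq c\,\norm{\Cxp}$. The resulting Bernstein tail with the intrinsic matrix-dimension factor gives, with probability at least $1-\delta/3$ each, $\norm{\ECxp-\Cxp}\leq \rate_n(\delta)\,\norm{\Cxp}$ and the analogous bound for $\ECyp$; the specific shape of $\rate_n(\delta)$ is precisely the Bernstein quantile, with the $\log(12r\delta^{-1})$ factor encoding the matrix dimension and the final union bound. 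By reverse triangle inequality one can write $\norm{\ECxp}=\norm{\Cxp}(1+\varepsilon_1)$, $\norm{\ECyp}=\norm{\Cyp}(1+\varepsilon_2)$ with $|\varepsilon_1|,|\varepsilon_2|\leq \rate_n(\delta)$, and the expansion $\hat b = b(1+\varepsilon_1)(1+\varepsilon_2)$ yields $|\hat b-b|\leq 3\rate_n(\delta)\,b$ on the event $\rate_n(\delta)\leq 1/3$ guaranteed by \eqref{eq:cond_sample_size}. Hence $\hat b\geq b(1-3\rate_n(\delta))$ and $|\hat b-b|/\hat b\leq 3\rate_n(\delta)/(1-3\rate_n(\delta))$, producing the first summand.

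For (b), I would linearize the square via $|\hat a-a|=|\scalarp{\ECxyp+\Cxyp,\,\ECxyp-\Cxyp}_{\rm HS}|\leq 2c\,\hnorm{\ECxyp-\Cxyp}$, using $\hnorm{\Cxyp},\hnorm{\ECxyp}\leq c$ from \eqref{eq:cond_bounded}. The deviation $\hnorm{\ECxyp-\Cxyp}$ is the Hilbert--Schmidt norm of an average of i.i.d.\ mean-zero matrices $\embXp{}(x_i)\embYp{}(x'_i)^\top-\Cxyp$, each of HS norm at most $2c$; a Pinelis-type Bernstein inequality in Hilbert space yields, with probability $1-\delta/3$, a bound of the form $C_1\,c\,\sqrt{\log(18\delta^{-1})/n}+C_2\,c/n$. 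Multiplying by the linearization factor $2c$ and absorbing constants gives exactly $\rate''_n(\delta)$, and dividing by the lower bound $\hat b\geq b(1-3\rate_n(\delta))$ from (a) yields the second summand.

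The main subtlety is item (b): $\hnorm{\ECxyp}^2$ is a degree-two $V$-statistic rather than a sum of i.i.d.\ scalars, so a direct scalar Hoeffding does not apply cleanly. The linearization $\hnorm{A}^2-\hnorm{B}^2=\scalarp{A+B,A-B}_{\rm HS}$ sidesteps this by reducing the task to a Hilbert-space concentration of the linear object $\ECxyp-\Cxyp$, for which Bernstein (or even Hoeffding, since summands are bounded) applies directly. The $c$-factor coming from the a priori upper bound on $\hnorm{\ECxyp}$ is precisely what produces the $c^2$ scaling appearing in $\rate''_n(\delta)$.
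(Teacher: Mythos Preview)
Your argument is correct and lands on exactly the stated bound. It is close in spirit to the paper's proof but differs in two places worth recording. First, your quotient decomposition places $\rscore(\param)$ directly in front and divides by the \emph{empirical} denominator $\hat b$, so the factors $3\rate_n/(1-3\rate_n)$ and $1/\big(b(1-3\rate_n)\big)$ fall out immediately from the lower bound $\hat b\geq b(1-3\rate_n)$. The paper instead writes the decomposition with $\erscore(\param)$ in front and the \emph{population} denominator $b$, then bounds $\erscore\leq |\erscore-\rscore|+\rscore$ and solves the resulting self-referencing inequality; this yields the same constants but is a step longer. Second, for the numerator $|\hnorm{\ECxyp}^2-\hnorm{\Cxyp}^2|$ the paper invokes the HSIC concentration of \cite{gretton2005measuring} (a $U$-statistic argument) as a black box, whereas you linearize via the difference-of-squares identity and apply a Hilbert-space Bernstein/Pinelis bound to the i.i.d.\ average $\ECxyp-\Cxyp$. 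Your route is more self-contained and produces a bound of the same form $c^2\sqrt{\log(\cdot)/n}+c^2/n$; the leading constant you obtain ($2\sqrt{2}$ from the $2c$ linearization factor times the Pinelis variance term) is not literally the $\sqrt{5}$ appearing in the paper's $\rate''_n(\delta)$, so ``gives exactly $\rate''_n(\delta)$'' is a slight overstatement, but the discrepancy is only in absolute constants. Everything else, including the matrix-Bernstein control of $\norm{\ECxp-\Cxp}$ and the three-event union bound producing the $\log(12r\delta^{-1})$ and $\log(18\delta^{-1})$ factors, matches the paper.
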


\begin{proof}
By definition of $\rscore(\param)$ and $\erscore(\param)$, we have
\begin{align}
    &\left\vert  \erscore(\param) - \rscore(\param)\right\vert \leq \erscore(\param) \frac{\left\vert \norm{\ECxp}\norm{\ECyp} - \norm{\Cxp}\norm{\Cyp} \right\vert }{\norm{\Cxp}\norm{\Cyp}} + \frac{\left\vert \hnorm{\ECxyp}^2 - \hnorm{\Cxyp}^2 \right\vert }{\norm{\Cxp}\norm{\Cyp}}\notag\\
    &\hspace{1cm}\leq \vert \erscore(\param)  -\rscore(\param)\vert \frac{\left\vert \norm{\ECxp}\norm{\ECyp} - \norm{\Cxp}\norm{\Cyp} \right\vert }{\norm{\Cxp}\norm{\Cyp}} +  \rscore(\param) \frac{\left\vert \norm{\ECxp}\norm{\ECyp} - \norm{\Cxp}\norm{\Cyp} \right\vert }{\norm{\Cxp}\norm{\Cyp}}\notag\\
    &\hspace{5cm}+ \frac{\left\vert \hnorm{\ECxyp}^2 - \hnorm{\Cxyp}^2 \right\vert }{\norm{\Cxp}\norm{\Cyp}}.\notag 
\end{align}
Using \eqref{eq:Covbound2}, \eqref{eq:Covbound3} below, we get with probability at least $1-2\delta$,
$$
\frac{\left\vert \norm{\ECxp}\norm{\ECyp} - \norm{\Cxp}\norm{\Cyp} \right\vert }{\norm{\Cxp}\norm{\Cyp}} \leq \rate_n(\delta)+\rate'_n(\delta)+\rate_n(\delta)\rate'_n(\delta).
$$
Next, \eqref{eq:Cross-cov-interm1} gives with probability at least $1-\delta$
$$
\frac{\left\vert \hnorm{\ECxyp}^2- \hnorm{\Cxyp}^2  \right\vert }{\norm{\Cxp}\norm{\Cyp}} \leq \frac{\rate''_n(\delta)}{\norm{\Cxp}\norm{\Cyp}}.
$$
Under Conditions \ref{eq:cond_bounded} and \ref{eq:cond_sample_size}, we get
$$
\rate_n(\delta)=\rate'_n(\delta) = 
 \frac{4c}{3\,\norm{ \Cxp }\,n} \log\left(4r\delta^{-1}\right) + \sqrt{\frac{2}{\norm{\Cxp}\,n}\log(4r\delta^{-1})}<\frac{1}{3},
$$
and
$$
\rate''_n(\delta):= c^2\sqrt{\frac{5 \log(6 \delta^{-1})}{n}} + c^2\frac{C}{n}.
$$
An union bound gives with probability at least $1-3 \delta$,
\begin{align}
    \left\vert  \erscore(\param) - \rscore(\param)\right\vert 
    &\leq \rscore(\param)\,\frac{3\rate_n(\delta)}{1-3\rate_n(\delta)}+ \frac{\rate''_n(\delta)}{\norm{\Cxp}\norm{\Cyp}(1-3\rate_n(\delta))}.
\end{align}
Replacing $\delta$ with $\delta/3$, we get the result with probability $1-\delta$.
\end{proof}

To control the operator norm deviation of the empirical covariances $\ECxp$ and $\ECyp$ from their population counterparts, we use the following dimension-free version of \citep{minsker2017} of the non-commutative Bernstein inequality (see also Theorem 7.3.1 in \citep{tropp2012user} for an easier to read and slightly improved version) as well as an extension to self-adjoint Hilbert-Schmidt operators on separable Hilbert spaces.%

\begin{proposition}[\citep{minsker2017} and Theorem 7.3.1 in \citep{tropp2012user}]\label{prop:con_ineq_0}
Let $A_i$, $i\in[n]$ be i.i.d copies of a random Hilbert-Schmidt operator $A$ on separable Hilbert spaces. Let $\norm{A}\leq c$ almost surely, $\EE A =0$ and let $\EE[A^2]\preceq V$ for some  trace class operator $V$. Then with probability at least $1-\delta$ 
\begin{equation}\label{eq:con_ineq_0}
\left\|\frac{1}{n}\sum_{i\in[n]}A_i \right\|\leq \frac{2c}{3n} \mathcal{L}_A(\delta)+ \sqrt{\frac{2\norm{V}}{n}\mathcal{L}_A(\delta)},
\end{equation}
where
\[
\mathcal{L}_A(\delta):= \log\frac{4}{\delta}+ \log\frac{\tr(V)}{\norm{V}} .
\]
\end{proposition}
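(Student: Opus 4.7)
The plan is to reduce the statement to the finite-dimensional matrix Bernstein inequality with the intrinsic-dimension refinement due to Tropp/Minsker, and then to extend by truncation to Hilbert--Schmidt operators on a separable Hilbert space. I will not reprove the classical moment-generating-function machinery from scratch, but outline how the three pieces combine.

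\emph{Step 1: Finite-dimensional Chernoff bound via Lieb's theorem.} Assume first that $A$ takes values in self-adjoint operators on a $d$-dimensional Hilbert space. Writing $S_n = \sum_{i\in[n]} A_i$, apply the operator Markov inequality together with Lieb's concavity theorem (equivalently, subadditivity of the matrix cumulant generating function) to get, for every $\theta\in(0,3/c)$,
\[
\PP\big(\lambda_{\max}(S_n)\geq nt\big)\;\leq\; e^{-\theta n t}\;\tr\exp\!\Big(\sum_{i\in[n]} \log \EE\, e^{\theta A_i}\Big).
\]
The Bernstein moment bound $\log \EE e^{\theta A} \preceq \tfrac{\theta^2/2}{1-c\theta/3}\,\EE[A^2]\preceq \tfrac{\theta^2/2}{1-c\theta/3}\,V$ valid under $\|A\|\leq c$ a.s., substituted above and optimized in $\theta$, gives the standard one-sided tail
\[
\PP\big(\lambda_{\max}(S_n)\geq nt\big)\;\leq\; d\,\exp\!\Big(\!-\frac{n t^2/2}{\|V\|+ct/3}\Big).
\]
A symmetrization (applying the same bound to $-A$) and a union bound yields the two-sided operator-norm tail with prefactor $2d$.

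\emph{Step 2: Replacing $d$ by the intrinsic dimension $\tr(V)/\|V\|$.} To sharpen the prefactor, follow Tropp's argument (Thm.~7.3.1) based on the auxiliary function $\varphi_\theta(x)=e^{\theta x}-\theta x -1$, which is operator monotone in the relevant range and satisfies $\varphi_\theta(M)\preceq \varphi_\theta(\|M\|)\,\|M\|^{-2}M^2$ on the positive cone. This lets one bound $\tr\exp(\sum \log \EE e^{\theta A_i})$ by $\|\exp(\cdot)\|\cdot\tr(\varphi_\theta(\cdot))$, and after optimizing in $\theta$ the dimension $d$ is replaced by the \emph{intrinsic dimension} $\mathrm{intdim}(V):=\tr(V)/\|V\|$. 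Inverting the resulting tail
\[
\PP\big(\|S_n/n\|\geq t\big)\;\leq\; 4\,\mathrm{intdim}(V)\,\exp\!\Big(\!-\frac{n t^2/2}{\|V\|+ct/3}\Big)
\]
and solving for $t$ at level $\delta$ (using $\sqrt{a+b}\leq \sqrt{a}+\sqrt{b}$ and the standard quadratic trick) produces exactly the two summands $\tfrac{2c}{3n}\mathcal{L}_A(\delta) + \sqrt{2\|V\|\mathcal{L}_A(\delta)/n}$ with $\mathcal{L}_A(\delta)=\log(4/\delta)+\log(\tr V/\|V\|)$, matching \eqref{eq:con_ineq_0}.

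\emph{Step 3: Extension to Hilbert--Schmidt operators on separable Hilbert spaces.} Since $A$ is Hilbert--Schmidt and $V$ is trace class, $V$ admits a spectral decomposition with summable eigenvalues; let $P_k$ denote the orthogonal projection onto the span of its top $k$ eigenvectors. Set $A_i^{(k)}=P_k A_i P_k$; these live in a $k$-dimensional self-adjoint space, satisfy $\|A_i^{(k)}\|\leq c$, $\EE A_i^{(k)}=0$, and $\EE[(A_i^{(k)})^2]\preceq P_k V P_k =: V_k$ with $\|V_k\|\nearrow\|V\|$ and $\tr V_k \nearrow \tr V$. Apply Step~2 to $A_i^{(k)}$ and pass to the limit $k\to\infty$: $n^{-1}\sum P_k A_i P_k \to n^{-1}\sum A_i$ in Hilbert--Schmidt norm (hence in operator norm) by dominated convergence, while $\mathcal{L}_{A^{(k)}}(\delta)\to \mathcal{L}_A(\delta)$ monotonically. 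The inequality \eqref{eq:con_ineq_0} then follows for $A$.

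\emph{Main obstacle.} The genuinely delicate ingredient is Step~2: producing the intrinsic-dimension factor in place of the ambient dimension $d$. It requires the auxiliary-function trick above together with a careful choice of the free parameter $\theta$, and is the content of Minsker's refinement of Tropp's inequality. Steps~1 and~3 are, by comparison, routine once that is in hand.
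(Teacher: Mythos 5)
The paper does not actually prove this proposition; it is stated as a cited result, imported from Minsker (2017) and Tropp (2012, Thm.~7.3.1), with a remark that one needs the extension to self-adjoint Hilbert--Schmidt operators on separable Hilbert spaces. Your sketch is a correct reconstruction of the argument underlying exactly those references: the Lieb/Laplace-transform matrix Bernstein bound, Tropp's intrinsic-dimension refinement, and the finite-rank truncation $A_i^{(k)}=P_kA_iP_k$ to pass to infinite dimensions. Two small points worth tightening, though neither is a gap: (i) with $P_k$ the projection onto the top $k$ eigenvectors of $V$ you have $\|V_k\|=\|V\|$ \emph{exactly} for every $k\ge 1$ (not merely $\nearrow$), which is what makes $\mathcal{L}_{A^{(k)}}(\delta)\le\mathcal{L}_A(\delta)$ immediate; and (ii) the passage to the limit in Step~3 should be phrased via a.s.\ convergence $Z_k\to Z$ plus Fatou (for any $\epsilon>0$, $\{Z\ge t+\epsilon\}\subseteq\liminf_k\{Z_k\ge t\}$ a.s., hence $\PP(Z>t)\le\delta$), rather than invoking ``dominated convergence'' for the tail probabilities themselves.
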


\begin{proposition}\label{prop:op_cov_bound}
Assume that $\bcon:=\sup_{x}\left\lbrace \norm{\embXp{}(x)}^2 \right\rbrace<\infty$.
Given $\delta>0$, with probability in the i.i.d. draw of $(x_i)_{i=1}^n$ from $\mX$, it holds that
\begin{equation}
\label{eq:Covbound}
    \PP\{ \norm{ \ECxp - \Cxp }/\norm{ \Cxp }\leq \rate_n(\delta) \} \geq 1-\delta,
\end{equation}

where
\begin{equation}\label{eq:eta}
\rate_n(\delta) := \frac{4\bcon}{3\,\norm{ \Cxp }\,n} \mathcal{L}(\delta) + \sqrt{\frac{2}{\norm{\Cxp}\,n}\mathcal{L}(\delta)}\quad\text{ and }\quad \mathcal{L}(\delta):= \log\frac{4r}{\delta}.
\end{equation}

\end{proposition}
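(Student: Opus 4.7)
The plan is to recast the deviation $\ECxp - \Cxp$ as an empirical mean of centered i.i.d.\ random self-adjoint operators and apply the dimension-free Bernstein bound of Prop.~\ref{prop:con_ineq_0}. Concretely, I set $A_i := \embXp{}(x_i) \embXp{}(x_i)^\top - \Cxp$ for $i \in [n]$, so that the $A_i$ are i.i.d.\ copies of a zero-mean random operator $A$ on $\R^r$, and $\ECxp - \Cxp = \tfrac{1}{n} \sum_{i \in [n]} A_i$.

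The first ingredient is a uniform norm bound: since $\embXp{}(x) \embXp{}(x)^\top$ is rank-one with spectral norm $\|\embXp{}(x)\|^2 \leq \bcon$, the triangle inequality together with $\|\Cxp\| \leq \bcon$ gives $\|A\| \leq 2\bcon$ almost surely. The second ingredient is a variance bound. Expanding,
\begin{equation*}
\EE[A^2] = \EE\bigl[\|\embXp{}(\rvX)\|^2\,\embXp{}(\rvX) \embXp{}(\rvX)^\top\bigr] - (\Cxp)^2 \preceq \bcon\, \Cxp,
\end{equation*}
where the first inequality uses $\|\embXp{}(\rvX)\|^2 \leq \bcon$ pointwise and the second drops the negative-semidefinite term $-(\Cxp)^2$. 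Thus I can take $V = \bcon\,\Cxp$, so $\|V\| = \bcon \|\Cxp\|$ and the effective-rank ratio $\tr(V)/\|V\| = \tr(\Cxp)/\|\Cxp\|$ is bounded by $r$ since $\Cxp$ is an $r \times r$ positive semidefinite matrix.

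Applying Prop.~\ref{prop:con_ineq_0} with these choices yields
\begin{equation*}
\Bigl\|\tfrac{1}{n}\sum_{i\in[n]} A_i\Bigr\| \leq \frac{4\bcon}{3n}\,\mathcal{L}_A(\delta) + \sqrt{\frac{2 \bcon \|\Cxp\|}{n}\,\mathcal{L}_A(\delta)}
\end{equation*}
with probability at least $1-\delta$, where $\mathcal{L}_A(\delta) = \log(4/\delta) + \log(\tr(V)/\|V\|) \leq \log(4r/\delta) = \mathcal{L}(\delta)$. Dividing both sides by $\|\Cxp\|$ gives exactly the claimed rate $\rate_n(\delta)$ in \eqref{eq:eta}.

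There is no real obstacle here: the argument is a textbook application of matrix Bernstein once the variance proxy $V = \bcon \Cxp$ is identified. The only delicate step is the variance computation, where one must notice that $\EE[\embXp{}(\rvX)\embXp{}(\rvX)^\top \embXp{}(\rvX)\embXp{}(\rvX)^\top] = \EE[\|\embXp{}(\rvX)\|^2\, \embXp{}(\rvX)\embXp{}(\rvX)^\top]$ thanks to the rank-one structure, and exploit the almost-sure bound on $\|\embXp{}\|^2$ in the operator order rather than only in operator norm, so that $V$ scales with $\Cxp$ and not merely with $\bcon I$. Using $\bcon I$ instead of $\bcon \Cxp$ would inflate $\tr(V)/\|V\|$ and, more importantly, the variance term by a factor depending on the conditioning of $\Cxp$.
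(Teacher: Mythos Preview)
Your argument is correct and matches the paper's (very terse) proof: apply Prop.~\ref{prop:con_ineq_0} to the centered operators $A_i = \embXp{}(x_i)\embXp{}(x_i)^\top - \Cxp$, use the rank-one structure to get $\EE[A^2]\preceq \bcon\,\Cxp =: V$, and bound $\tr(V)/\|V\| = \tr(\Cxp)/\|\Cxp\| \leq r$. One small caveat: with $V=\bcon\,\Cxp$ your variance term after dividing by $\|\Cxp\|$ reads $\sqrt{2\bcon\,\mathcal{L}(\delta)/(n\|\Cxp\|)}$, carrying an extra factor $\sqrt{\bcon}$ relative to the $\rate_n(\delta)$ stated in \eqref{eq:eta}; this is not a flaw in your derivation but a missing constant in the paper's displayed rate.
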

\begin{proof}[Proof of Proposition \ref{prop:op_cov_bound}]
Proof follows directly from Proposition \ref{prop:con_ineq_0} applied to operators  $\embXp{}(x_i)\otimes\embXp{}(x_i)$ using the fact that $\Cxp = \EE\,\embXp{}(x_i)\otimes\embXp{}(x_i)$,
where we recall that $\embXp{}(x):=(\embXp{,1}(x),\ldots, \embXp{,r}(x)) \in\R^r$. Hence, we have the obvious upper bound $\frac{\tr(\Cxp)}{\norm{\Cxp}}\leq r.$
\end{proof}
We deduce from \eqref{eq:Covbound}, with probability at least $1-\delta$
\begin{equation}
\label{eq:Covbound2}
(1 - \rate_n(\delta))\norm{ \Cxp } \leq \norm{ \ECxp} \leq \norm{ \Cxp } (1 +\rate_n(\delta))\norm{ \Cxp },
\end{equation}
A similar result is also valid for $\ECyp$ provided that $\bconbis:=\sup_{\param,x}\left\lbrace \norm{\embYp{}(x)}^2 \right\rbrace<\infty$. Define
\begin{equation}\label{eq:etabis}
\rate'_n(\delta) := \frac{4\bconbis}{3\,\norm{ \Cxp }\,n} \mathcal{L}(\delta) + \sqrt{\frac{2}{\norm{\Cxp}\,n}\mathcal{L}(\delta)}\quad\text{ and }\quad \mathcal{L}(\delta):= \log\frac{4r}{\delta}.
\end{equation}
Then, with probability at least $1-\delta$
\begin{equation}
\label{eq:Covbound3}
(1 - \rate'_n(\delta))\norm{ \Cyp } \leq \norm{ \ECyp} \leq (1 +\rate'_n(\delta))\norm{ \Cyp },
\end{equation}

We study now the deviation of $\hnorm{\ECxyp}^2$ from $\hnorm{\Cxyp}^2$. We can essentially apply Theorem 3 in \citep{gretton2005measuring} with kernels $k_{\param}(x,x') = \langle \embXp{}(x),\embXp{}(x') \rangle$ and $l_{\param}(y,y') = \langle \embYp{}(y),\embYp{}(y') \rangle$. Note that these two kernels are essentially bounded. Indeed we have
\begin{align*}
    \esssup_{x,x'} \left\vert k_{\param}(x,x')\right\vert \leq \sup_{x}\|\embXp{}(x)\|^2\leq \bcon\quad\text{and}\quad \esssup_{y,y'} \left\vert l_{\param}(y,y')\right\vert \leq \sup_{x}\|\embYp{}(y)\|^2\leq \bconbis.
\end{align*}
Hence, for any $n\geq 2$ and $\delta>0$, we get with probability at least $1-\delta$,
\begin{align}
\label{eq:Cross-cov-interm1}
\left\vert  \hnorm{\ECxyp}^2- \hnorm{\Cxyp}^2   \right\vert  \leq  \epsilon''_n(\delta),
\end{align}
where 
$$
\epsilon''_n(\delta):= \bcon \bconbis\sqrt{\frac{5 \log(6 \delta^{-1})}{n}} + \bcon \bconbis\frac{C}{n},
$$
for some absolute constant $C>0$.

\subsection{Operator regression and prediction}
\label{app:prediction}

We next discuss how to design an estimator of the transfer operator $\TO\colon\LiiY\to\LiiX$ using the learned subspaces $\spHXp$ and $\spHYp$.  Namely, we estimate $\TO\approx \ETO_{\param} \colon \spHYp\to \spHXp$. The purpose of such estimation is to, given a initial state $x\in \spX$, predict the average evolution $\EE[ f(X')\,\vert\,X=x]$ of an observable $f\in\LiiY$. Remark that in the main text, we have just discussed the task when one takes $\spHYp=\spHXp$.

In what follows, let us assume that after the training we obtained $\param\in\spParam$ such that $\Cxp$ and $\Cyp$ are invertible, that is that $(\embXp{,j})_{j\in[r]}$  $(\embYp{,j})_{j\in[r]}$ form basis of the spaces $\spHXp$ and $\spHYp$, respectively. This means that the operators $\Enc{\param,}\colon \R^r\mapsto\spHXp$ and  $\Enc{\param}'\colon \R^r \mapsto \spHYp$ can be properly defined as partial isometries by $\Enc{\param}v = \embXp{}(\cdot)^\top v$ and $\Enc{\param}'v = \embYp{}(\cdot)^\top v$. So, every estimator can be written in the form $\ETO_{\param} = \Enc{\param}\EEstim (\Enc{\param}')^*$ for some $\EEstim\in\R^{r\times r}$. 
Different estimators can then computed from data  $\Data_n:=(x_i,x_i')_{i\in[n]}$ 
(either seen or unseen during training time). To elaborate on this, let us, as usual for kernel methods, define the sampling operators $\ESX\colon \spHXp\to\R^{n}$ and $\ESY\colon \spHYp\to\R^{n}$, given by $\ESX h = n^{-\frac{1}{2}} [ h(x_1)\,\ldots\, h(x_n)]^\top$, $f\in\spHXp$, and $\ESY g = n^{-\frac{1}{2}} [ g(x_1')\,\ldots\, g(x_n')]^\top$, $g\in\spHYp$. Notice that we can extend the domain of definition of these  operators via interpolation %
to arbitrary  functions $\spX\to\R$ that can be evaluated on a dataset, respectively. Hence, without possible confusion, when evaluating we can use $\ESX f$ and $\ESY f'$ even when $f\not\in\spHXp$ or $f'\not\in\spHYp$. 

Now, as shown in \citep{Kostic2022}, the empirical estimator $\ETO_{\param}$ of the transfer operator $\TO$ using dataset $\Data_n$ can be obtained via operator regression by minimizing the empirical risk 
\[
\hnorm{\ESY - \ESX \ETO_{\param} }^2 = \hnorm{\ESY - \ESX \Enc{\param}\EEstim (\Enc{\param}')^*}^2 = \hnorm{\ESY\Enc{\param}' - \ESX \Enc{\param}\EEstim }^2
\]
where the last equality holds since $\Enc{\param}'$ is a partial isometry. Therefore, the simple least square (LS) estimator is than obtained as $\EEstim:= ( \Enc{\param}^*\ESX^*\ESX \Enc{\param})^\dagger (\Enc{\param}^*\ESX^*\ESY\Enc{\param}') = (\ECxp)^\dagger\ECxyp$, or, equivalently, as $\ETO_{\param} = \Enc{\param}\EEstim(\Enc{\param}')^* = \ESX^* (\ESX\ESX^*)^\dagger\ESY$

Once the regression is performed, recalling that $\rvY$ is a $\Delta t = 1$ step ahead evolution of $\rvX$ we can use it to approximate 
$\EE[ f(\rvY)\,\vert\,\rvX = x] \approx (\ETO_{\param} f)(x) $ for $f\colon\spX\to\R$, as the following result shows. 

\begin{proposition}\label{prop:prediction}
Let $\ETO\colon\spHYp\to\spHXp$ be a LS estimator of $\TO$, then for every $x\in\spX$ and $f\in\spHXp$
\begin{equation}\label{eq:prediction}
(\ETO_{\param} f)(x) = \embXp{
}(x)^\top\, (\ECxp)^\dagger \embMXp [f(x_1')\,\vert\cdots\,\vert f(x_n')]^\top,
\end{equation}
where $\embMXp  := [\embXp{}(x_1)\,\vert\,\cdots\,\vert\,\embXp{}(x_n)]\in\R^{r\times n}$. 
\end{proposition}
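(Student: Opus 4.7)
The proof is a direct unpacking of the definitions. My plan is to combine the factorization $\ETO_\param = \Enc{\param}\,\EEstim\,(\Enc{\param}')^*$ with the LS identity $\EEstim = (\ECxp)^\dagger \ECxyp$ derived in the preamble of App.~\ref{app:prediction}, and then rewrite the cross-covariance in terms of the data matrices.

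I would start by writing $f = \embYp{}(\cdot)^\top c$ for some $c \in \R^r$. This is possible under the standing assumption that $\Cyp$ is invertible, which makes $\Enc{\param}' : \R^r \to \spHYp$ a bijective partial isometry, so that $c = (\Enc{\param}')^* f$ is uniquely defined. By the definition of $\Enc{\param}$, pointwise evaluation at $x \in \spX$ then gives $(\ETO_\param f)(x) = \Enc{\param}[\EEstim c](x) = \embXp{}(x)^\top \EEstim c$.

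Substituting $\EEstim = (\ECxp)^\dagger \ECxyp$ and expanding the cross-covariance via its empirical definition $\ECxyp = \tfrac{1}{n}\sum_{i \in [n]} \embXp{}(x_i)\,\embYp{}(x_i')^\top$, I would use $\embYp{}(x_i')^\top c = f(x_i')$ to express $\ECxyp c$ as a product of the data matrix $\embMXp$ with the column vector $[f(x_1'), \ldots, f(x_n')]^\top$. Plugging this back into the expression for $(\ETO_\param f)(x)$ yields the claimed identity (up to the normalization absorbed into the stated form).

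The argument is essentially routine linear algebra; the only conceptual subtlety, and the main potential obstacle, is that the statement is phrased for $f \in \spHXp$ while $\ETO_\param$ is defined on $\spHYp$. I would address this either by specializing to the common-feature setting $\spHXp = \spHYp$, or by proving the equality first for $f \in \spHYp$ and then observing that the right-hand side depends on $f$ only through its values at $x$ and at the data points $x_i'$, so the formula extends coherently to any function $f$ on $\spX$ evaluable at these points.
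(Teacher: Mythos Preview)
Your proposal is correct and follows essentially the same route as the paper: both combine $\ETO_\param = \Enc{\param}(\ECxp)^\dagger\ECxyp(\Enc{\param}')^*$ with the identification of $\ECxyp c$ as the data-matrix product, the paper merely phrasing this via the sampling operators $\ESX,\ESY$ (using $\embMXp^\top=\sqrt{n}\,\ESX\Enc{\param}$ and $\Enc{\param}^*\ESX^*\ESY\Enc{\param}'=\ECxyp$) rather than by expanding the empirical sum directly. Your remark on the $\spHXp$/$\spHYp$ domain mismatch is apt---the paper resolves it exactly as you suggest, noting immediately after the proof that the formula extends by interpolation to general $f\in\LiiY$.
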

\begin{proof}
The proof follows directly observing that $\embMXp^\top = \sqrt{n} [\ESX \embXp{,1}\,\vert\,\ldots\,\vert\,\ESX \embXp{,r}] = \sqrt{n} \ESX\Enc{\param}$. Namely, then 
\[
\embMXp [f(x_1')\,\vert\cdots\,\vert f(x_n')]^\top = (\ESX\Enc{\param})^* \ESY f = \Enc{\param}^* \ESX^*\ESY f = \Enc{\param}^* \ESX^*\ESY \Enc{\param}\Enc{\param}^*f = \ECxyp \Enc{\param}^*f 
\] 
and, hence 
\[
\embXp{
}(x)^\top\, (\ECxp)^\dagger \embMXp [f(x_1')\,\vert\cdots\,\vert f(x_n')]^\top = \Enc{\param} (\ECxp)^\dagger \ECxyp \Enc{\param}^*f = \ETO_\param f.
\]
\end{proof}
We remark that the previous result formally holds for $f\in\spHYp$, but it can be easily extended to functions in $\LiiY$ via interpolation.

\subsection{Dynamics mode decomposition and forecasting}
\label{app:forecasting}
Now we consider the problem of forecasting the process for several time steps in future using what is known as (extended) dynamic mode decomposition, which is based on the estimated eigenvalues and eigenfunctions of the transfer operator. As observed in the main body, this is meaningful only if the operator is an endomorphism on a function space, that is if it maps the space into itself. 

Hence, after training DPNet we will use just one representation $\embXp{}$ and its $r$-dimensional space of functions $\spHXp:=\Span(\embXp{,j})_{j\in[r]}$ to perform the operator regression, as explained in the previous section, and
obtain an estimator $\ETO_{\param} = \Enc{\param}\EEstim \Enc{\param}^*\colon \spHXp{}\to\spHXp{}$, for some matrix $\EEstim\in\R^{r\times r}$. Then, if 
$(\eeval_i, \elevec_i, \erevec_i)_{i\in[r]}\subset\C \times \C^r \times \C^r$ is a spectral decomposition of $\EEstim$, then $(\eeval_i, \Enc{\param}\elevec_i, \Enc{\param}\erevec_i)_{i\in[r]}$ is a spectral decomposition of $\ETO_{\param}$. In the following result we show how to compute dynamic mode decomposition of $\TO$ based on the estimator $\ETO_{\param}$ and use it for forecasting by approximating $\EE[ f(X_{t})\,\vert\,X_0 = x] \approx ((\ETO_{\param})^t f)(x)$, for $f\colon\spX\to\R$, $x\in\spX$.

\begin{proposition}\label{prop:forecasing}
Let $\ETO_{\param} = \Enc{\param}\EEstim\Enc{\param}\colon\spHXp\to\spHXp$ be rank $r$ LS estimator of $\TO_{\Delta t}$, for $\Delta t =1$. If $\EEstim=\sum_{i\in[r]}\eeval_i\erevec_i\,\elevec_i^*$ is the spectral decomposition of $\EEstim$, and $\erefun_i(x) := \embXp{}(x)^\top \erevec_i$ and $\elefun_i(x) :=  (\elevec_i)^*\embXp{}(x)$, $i\in[r]$, then for every $t\in\N$, every $x\in\spX$ and every $f\in\spHXp$ it holds that
\begin{equation}\label{eq:estimator_KMD}
((\ETO_{\param})^t f)(x) = \sum_{i\in[r]} \eeval_i^{t}\,\erefun_i (x)\,\elevec_i^*\Dec{\param}(f),
\end{equation}
where $\Dec{\param}(f):= \widehat{\Lambda}^{-1}(\ECxp)^\dagger \embMXp [f(x_1')\,\vert\dots\,\vert f(x_n')]^\top\in\R^{r}$.
\end{proposition}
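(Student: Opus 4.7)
The plan is to reduce the statement to an iteration of the one-step formula of Prop.~\ref{prop:prediction} combined with the spectral decomposition of the matrix $\EEstim$. The key structural fact I would rely on is that since $(\embXp{,j})_{j\in[r]}$ is a basis of $\spHXp$ (as assumed at the start of App.~\ref{app:prediction}), the encoder $\Enc{\param}\colon\R^r\to\spHXp$ is an isometric isomorphism so that $\Enc{\param}^*\Enc{\param}=I_r$.

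First, I would telescope: since $\ETO_{\param} = \Enc{\param}\EEstim\Enc{\param}^*$ and $\Enc{\param}^*\Enc{\param}=I_r$, consecutive factors collapse to give $(\ETO_{\param})^t = \Enc{\param}\,\EEstim^t\,\Enc{\param}^*$. Next, using the biorthogonality $\elevec_i^*\erevec_j = \delta_{i,j}$, direct induction on $t$ yields $\EEstim^t = \sum_{i\in[r]} \eeval_i^t \,\erevec_i \elevec_i^*$, and the completeness of the biorthogonal system provides the resolution $\sum_i \erevec_i\elevec_i^* = I_r$. Substituting these into the telescoped identity, for every $f\in\spHXp$ and every $x\in\spX$,
\begin{equation*}
((\ETO_{\param})^t f)(x) \;=\; \embXp{}(x)^\top\,\EEstim^t\,\Enc{\param}^* f \;=\; \sum_{i\in[r]} \eeval_i^t\,\erefun_i(x)\,\bigl(\elevec_i^* \Enc{\param}^* f\bigr).
\end{equation*}

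It remains to identify $\elevec_i^*\Enc{\param}^* f$ with $\elevec_i^*\Dec{\param}(f)$. The cleanest route is to apply Prop.~\ref{prop:prediction} at $t=1$, which states $(\ETO_{\param} f)(x) = \embXp{}(x)^\top(\ECxp)^\dagger \embMXp F'$ with $F'=[f(x_1')\,|\,\ldots\,|\,f(x_n')]^\top$. Comparing this with the $t=1$ case of the display above and using that the right eigenvectors $\{\erevec_i\}$ (equivalently the functions $\{\erefun_i\}$) span $\R^r$, one equates coefficients componentwise and obtains $\EEstim\,\Enc{\param}^* f = (\ECxp)^\dagger\embMXp F'$, and hence $\elevec_i^*\Enc{\param}^* f = \eeval_i^{-1}\elevec_i^*(\ECxp)^\dagger\embMXp F'$. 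Interpreting $\widehat{\Lambda}$ as the operator for which $\elevec_i^*\widehat{\Lambda}^{-1} = \eeval_i^{-1}\elevec_i^*$ (the natural candidate being the diagonal matrix of the $\eeval_i$ in the biorthogonal basis, so that $\EEstim = V\widehat{\Lambda}U^*$), this gives exactly $\elevec_i^*\Enc{\param}^* f = \elevec_i^*\Dec{\param}(f)$, and plugging back yields \eqref{eq:estimator_KMD}.

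The main obstacle is the last step: Prop.~\ref{prop:prediction} delivers $(\ECxp)^\dagger\embMXp F'$, which is $\EEstim\,\Enc{\param}^* f$ rather than $\Enc{\param}^* f$ itself, so recovering $\Enc{\param}^* f$ requires inverting $\EEstim$ on its range — this is precisely what the factor $\widehat{\Lambda}^{-1}$ in $\Dec{\param}(f)$ encodes. The argument therefore tacitly assumes all $\eeval_i\neq 0$ (or, equivalently, full-rank $\EEstim$, so that the $r$ eigenvectors form a complete biorthogonal system); otherwise the inversion must be read as a pseudo-inverse on $\range(\EEstim)$ and the expansion restricted accordingly.
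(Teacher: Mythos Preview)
Your proof is correct and follows essentially the same approach as the paper: telescope $(\ETO_{\param})^t$ using $\Enc{\param}^*\Enc{\param}=I_r$, insert the spectral decomposition of the matrix power, and then invoke Prop.~\ref{prop:prediction} to identify the remaining coefficient with $\Dec{\param}(f)$. The only cosmetic difference is that the paper expands $\EEstim^{t-1}$ spectrally while keeping one factor of $\EEstim$ attached to $\Enc{\param}^* f$, so that $\elevec_i^*\EEstim\Enc{\param}^* f$ appears directly and matches Prop.~\ref{prop:prediction} without first computing $\elevec_i^*\Enc{\param}^* f$; the subsequent multiplication and division by $\eeval_i$ (your ``invert $\EEstim$ on its range'') is identical in both arguments, as is the implicit nonzero-eigenvalue caveat you correctly flag.
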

\begin{proof}
First observe that 
\[
(\ETO_{\param})^t f= \Enc{\param}\EEstim^t\Enc{\param}^* f= \Enc{\param}\EEstim^{t-1}\Enc{\param}^*\Enc{\param}\EEstim\Enc{\param}^* f = \sum_{i\in[r]} \eeval_i^{t-1}\,(\erefun_i \otimes \elefun_i) \Enc{\param}\EEstim\Enc{\param}^* f.
\]
Hence, 
\[
((\ETO_{\param})^t f)(x)= \sum_{i\in[r]} \eeval_i^{t-1}\,\erefun_i(x) (\elevec_i^* \EEstim\Enc{\param}^*f) = \sum_{i\in[r]} \eeval_i^{t}\,\erefun_i(x) (\eeval^{-1}\elevec_i^* \EEstim\Enc{\param}^*f) 
\] 
and the rest of the proof follows as in Prop.~\ref{prop:prediction}. 
\end{proof}

We remark that $\elevec_i^*\Dec{\param}(f)$ is known as $i$-th Koopman mode of the observable $f$, and that in comparison to Encoder-Decoder approaches $\Dec{\param}(f)$ can be considered as a decoder when forecasting function $f\colon\spX\to\R$. Clearly this is easily extended to vector valued functions, and, hence, we can forecast the states by using $\elevec_i^*\Dec{\param}(\Id):= \widehat{\Lambda}^{-1}(\ECxp)^\dagger \embMXp [x_1'\,\vert\dots\,\vert x_n']^\top\in\R^{d}$, where $\spX\subset\R^d$.

\subsection{Extended Algorithm and Training Time}\label{app:train}

\begin{algorithm}
\caption{DPNets Training (Discrete)}\label{alg:dpnetsapp}
\begin{algorithmic}[1]
\Require data $\Data_n = (x_1,\dots, x_n)$, $\Data'_n = (x'_1,\dots,x'_n)$, metric distortion loss $\Reg$; optimizer $U$; DNNs $\psi_w, \psi'_w: \spX \to \R^r$; metric loss coefficient $\reg$; \# of steps $K$; minibatch size $m$.

\State Initialize DNN weights $w_1$

\For{$k=1$ to $K$},

    \State Sample minibatches $(y_1,\dots, y_m)$ from $\Data_n$, and $(y'_1,\dots,y'_m)$ from $\Data'_n$. 

    \State Compute empirical covariance matrices
\
    
$
\widehat{C}_{X}^{w_k} \gets \tfrac{1}{n}  \sum_{i=1}^m \psi_{w_k}{}(y_i) \psi_{w_k}{}(y_i)^\top, \widehat{C}_{X'}^{w_k} \gets \tfrac{1}{n}{\sum_{i=1}^m} \psi'_{w_k}{}(y'_i) \psi'_{w_k}{}(y'_i)^\top
$

$ 
\widehat{C}_{XX'}^{w_k} \gets \tfrac{1}{n} 
{\sum_{i=1}^m} \psi_{w_k}{}(y_i) \psi'_{w_k}{}(y'_i)^\top
$

    \If{DPNets-relaxed} 

        \State 
        $F(w_k)\gets \erscore^\reg_m(w_k) := \frac{\hnorm{\widehat{C}_{XX'}^{w_k}}^2}{ \norm{\widehat{C}_{X}^{w_k}}\norm{\widehat{C}_{X'}^{w_k}}}
- \reg \big(\Reg(\widehat{C}_{X}^{w_k}) + \Reg(\widehat{C}_{X'}^{w_k})\big)$ 

    \Else

        \State 
        $
        F(w_k) \gets
         \escore^\reg_m(w_k) := \hnorm{(\widehat{C}_{X}^{w_k})^{\frac{\dagger}{2}}\widehat{C}_{XX'}^{w_k} (\widehat{C}_{X'}^{w_k})^{\frac{\dagger}{2}}}^2 - \reg \big(\Reg(\widehat{C}_{X}^{w_k}) + \Reg(\widehat{C}_{X'}^{w_k})\big)
        $ 

    \EndIf %

    \State $w_{k+1} \gets U(w_{k}, \nabla F(w_k)$) where $\nabla F(w_k)$ is computed via backpropagation

\EndFor
\State \textbf{return} representations $\psi_{w_K}$, $\psi'_{w_K}$
\end{algorithmic}

\end{algorithm}

\Cref{alg:dpnetsapp} is an extended version of \Cref{alg:dpnets}.
The time complexity of computing the empirical scores $\escore^\reg_{m}(w)$, $\erscore^\reg_m(w)$ and (thanks to backpropagation) their gradients, is $O(m\,\text{Cost}(\psi_w) +mr^2 + r^3)$  and $O(m\,\text{Cost}(\psi_w) +mr^2)$ respectively, where $\text{Cost}(\psi_w)$ is the cost of one evaluation of $\psi_w$. 
Namely, computing the embeddings for $m$ samples costs $O(m\,\text{Cost}(\psi_w))$, computing the (cross)covariance matrices for $m$ samples costs $O(m r^2)$, computing the pseudoinverse via eignevalue decomposition costs $O(r^3)$, while computing the operator norm of the covariance matrices using e.g.\@ the Arnoldi iteration method costs $O(r^2)$.
We note that the cost of training VAMPNets \citep{mardt2018vampnets} is the of the same order as DPNets without relaxation, since evaluating the metric distortion loss is relatively cheap once we have the covariance matrices.

\section{SDE learning}\label{app:SDE}

We next prove the second main result on the optimization problem \eqref{eq:optimization_continious}.

\thmScoreCont*
\begin{proof}
 In view of Lem.~\ref{lm:partial_trace}, we only need to prove that $\tr(\proj{\spHXp}\LO\proj{\spHXp}) = \tr\left((\Cxp)^{\dagger}\Cxyd \right)$. To that end, we reason as in the proof of Lem.~\ref{lm:proj_to_cov} to obtain that $\proj{\spHXp} =  U Q (\Cxp)^{\dagger/2}Q^*\TS^*$, where $\TS\colon\spHXp \hookrightarrow \Wii$ is an injection and   $Q\colon\R^{r}\to\spHXp$ and $U\colon\spHXp\to\Wii$ are partial isometries. 
Moreover, recalling \eqref{eq:generator}, we have that 
\begin{align*}
    (Q^*\TS^* \LO \TS Q) & = \lim_{\Delta t\to 0^+} \frac{Q^*\TS^* (\TO_{\Delta t} - \Id) \TS Q}{\Delta t}\\
    & = \lim_{\Delta t\to 0^+} Q^*\left[ \int_{\spX} \im(dx) \fHXp(x) \otimes \frac{\int_{\spX} p_{\Delta t}(x,dx') (Q^*\fHXp(x') - Q^*\fHXp(x))}{\Delta t} \right] \\
    & =  \int_{\spX} \im(dx) \embXp{}(x) \otimes \left( \lim_{\Delta t\to 0^+} \frac{\int_{\spX} p_{\Delta t}(x,dx') (\embXp{}(x') - \embXp{}(x))}{\Delta t}\right)  \\
    & = \EE_{X\sim \im} \left[\embXp{}(X) \otimes d\embXp{}(X)\right] = \Cxyd, 
\end{align*}
where the last line follows from It\={o} formula  \citep[see e.g.][]{arnold1974}. 
Hence, using $\proj{\spHXp} =  U Q (\Cxp)^{\dagger/2}Q^*\TS^*$ we have that 
\[
\tr(\proj{\spHXp}\LO \proj{\spHXp} ) = \tr((\Cxp)^{\dagger/2} \Cxyd (\Cxp)^{\dagger/2}) = \tr((\Cxp)^{\dagger}\Cxyd),
\]
which completes the proof
\end{proof}

To conclude this section, we remark that in the continuous setting the  estimator $\ELO_{\param}\colon\spHXp\to\spHXp$ of $\LO$ on the learned space $\spHXp$ can be obtained via operator regression in a similar way as discussed in Sec.~\ref{sec:methods}. So, the LS estimator is given by matrix $\widehat{L}\,\,{=}\,\,(\ECxp)^\dagger \ECxyd$, and its spectral decomposition is 
\begin{equation}\label{eq:estimator_evd_cont}
\ELO_{\param}= \textstyle{\sum_{i\in[r]}} \eeval_i \,\erefun_i \otimes \elefun_i, \quad\text{ where }\quad \erefun_i(x) := \embXp{}(x)^\top \erevec_i\;\text{ and }\; \elefun_i(x) :=  (\elevec_i)^*\embXp{}(x),
\end{equation}
where $\widehat{L}=\sum_{i\in[r]}\eeval_i \erevec_i\,\elevec_i^\top$ is the spectral decomposition of the matrix $\widehat{L}$.

Hence, using that $\eval_i(\TO_{\Delta t})\,{=}\,\exp(\eval_i(\LO))$, we directly obtain the modal decomposition in the continuous time, 
\begin{equation}\label{eq:estimator_KMD_cont}
\EE[ f(X_{t})\,\vert\,X_0 = x] \approx \textstyle{\sum_{i\in[r]}} \exp(\eeval_i\,t)\,\erefun_i (x)\,\elevec_i^*\Dec{\param}(f),\quad t\in[0,+\infty)
\end{equation}
where $\Dec{\param}(f)\,{=}\, (\ECxp)^\dagger \embMXp [f(x_1\,)\,\vert\cdots\,\vert f(x_n)]^\top\,{\in}\,\R^{r}$ are the coefficients of the LS estimator of $f$ in $\spHXp$. As a final remark, note that in \eqref{eq:estimator_KMD_cont} we regress function $f$ onto $\spHXp$ using least squares with data $(x_i,f(x_i))$.

\section{Experiments}\label{app:exp}

\textbf{Hardware~} The experiments were performed on a workstation equipped with an Intel(R) Core\texttrademark i9-9900X CPU @ 3.50GHz, 48GB of RAM and a NVIDIA GeForce RTX 2080 Ti GPU. Due to RAM insufficiency, the Nystr\"om baseline reported in Table~\ref{tab:chignolin} was performed on a CPU node of a cluster with 2x AMD EPYC 7713 @ 2.0GHz and 512GB of RAM.

\textbf{Software~} All experiments and baselines have been implemented in Python 3.11 and Pytorch 2.0, the only exception being the PINNs baseline for the fluid flow experiment, for which we relied on the original implementation of the code. All the code to reproduce the experiments will be made openly available. 

\textbf{General remarks~} Every algorithm has been performed in \texttt{float32} single precision, fixing the random number generator seed where appropriate. We made sure that each algorithm was trained on the same combinations of input-output data, and for neural network models we used the same batches and number of epochs. The learning rate for each method was tuned independently by running a small number of steps at 100 equally spaced learning rates in the interval $(10^{-6}, 10^{-2})$ and selecting the best out of these. The best learning rates for each experiment are reported in \Cref{tab:lrates}

\begin{table}[]
\centering
\begin{tabular}{@{}l|lllll@{}}
\toprule
          & \textbf{DPNets} & \textbf{DPNets-relax} & \textbf{VAMPNets} & \textbf{DynAE} & \textbf{ConsAE} \\ \midrule
\textbf{Logistic}  & $1 \times 10^{-4}$   & $1 \times 10^{-4}$         & $3 \times 10^{-6}$     & -     & -      \\
\textbf{Fluid}     & Failed      & $1 \times 10^{-4}$        & Failed        & $9 \times 10^{-4}$  & $9 \times 10^{-4}$   \\
\textbf{MNIST}     & $9 \times 10^{-4}$   & $9 \times 10^{-4}$         & $9 \times 10^{-4}$     & $9 \times 10^{-4}$  & $9 \times 10^{-4}$   \\
\textbf{Chignolin} & Failed      & $1 \times 10^{-3}$         & Failed        & -     & -      \\
\textbf{Langevin}  & $1 \times 10^{-3}$  & -            & -        & -     & -      \\ \bottomrule
\end{tabular}
\caption{Best learning rates found by a grid search for each (experiment, method) pair. For all experiments, the grid was made by $100$ equally spaced values in the interval $(10^{-6}, 10^{-2})$.}\label{tab:lrates}
\end{table}

Once a representation was learned, we always used the Ordinary Least Squares estimator described in~\ref{sec:methods} to perform the subsequent evaluation tasks.

\subsection{Logistic map}\label{app:exp_logistic}
\textbf{Data~} We generated the data as explained in~\citet{Kostic2022} for a value $N = 20$ of the trigonometric noise. To train DPNets and VAMPNets we have sampled a trajectory of $2^{14} \approx 16000$ points. We used a batch size of $2^{13}$ points and trained for 500 epochs.

\textbf{Optimization~} Adam with learning rate tuned as explained in the general remarks.

\textbf{Architecture~} Multi layer perceptron of shape \texttt{Linear[64]$\,\to\,$Linear[128]$\,\to\,$Linear[64]$\,\to\,$Linear[feature\_dim]} with Leaky ReLU activations. We have set the dimension of the feature map to $r=7$ as the minimal dimension allowing to meaningfully learn the first three eigenvalues.

\textbf{Additional results: the role of feature dimension~} In the left panel of Fig.~\ref{fig:logistic_featuredim} we compare the eigenvalues and the singular values of $\TO$ for the noisy logistic map, while in the center and right panels we show the metrics reported in Tab.~\ref{tab:1} of the main text as a function of the feature dimension $r$. Notice how the singular values decay significantly slower than eigenvalues, a consequence of the fact that the transfer operator is not normal, i.e. $\TO\TO^*\neq \TO^*\TO$. Non-normality makes the estimation of the spectra of $\TO$ particularly sensitive, as captured by the pseudospectra, see \cite{TrefethenEmbree2020}. In Fig.~\ref{fig:logistic_eigs} we show how small estimation errors in operator norm (label on the contour lines) incur larger errors in the eigenvalue estimation (distance of the true eigenvalues to the contour lines in the complex plane). This means that for non-normal operators the estimation error needs to be typically much smaller than the modulus of the eigenvalues one wants to recover. For $r < 7$, the spectral error of every model is of the same order of the eigenvalues to approximate, as unequivocally shown in Fig.~\ref{fig:logistic_eigs}. At $r = 7$, DPNets-relaxed already give a decent approximation of the three leading eigenvalues, see also Fig.~\ref{fig:logistic_eigs}. From $r > 7$ onward, every model progressively yields reasonable estimations, with DPNets and Cheby-T quickly catching up with DPNets-relaxed. For the optimality gap, every model shows an improving trend by increasing the feature dimension $r$, with DPNets showing the strongest performance.

\begin{figure}[!ht]
  \centering \includegraphics[width=0.99\textwidth]{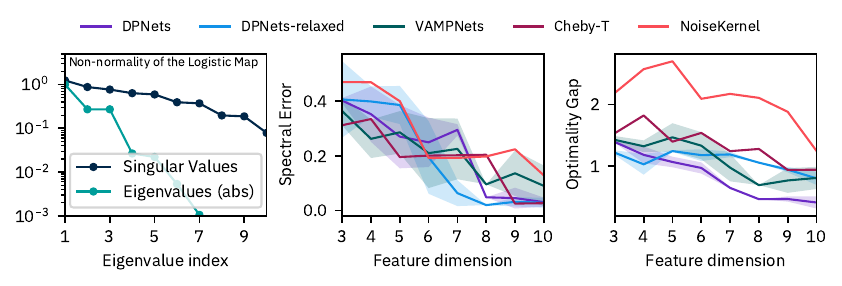}
    \caption{(Left) Decay of the singular values $\sigma_{i}(\TO)$ and of the the eigenvalues $|\lambda_{i}(\TO)|$ for the logistic map example. (Center, Right) Spectral error and Optimality gap as a function of the feature dimension $r$ for the baselines considered in Tab.~\ref{tab:1}.}
\label{fig:logistic_featuredim}
\end{figure}

\begin{figure}[!ht]
  \centering \includegraphics[width=0.99\textwidth]{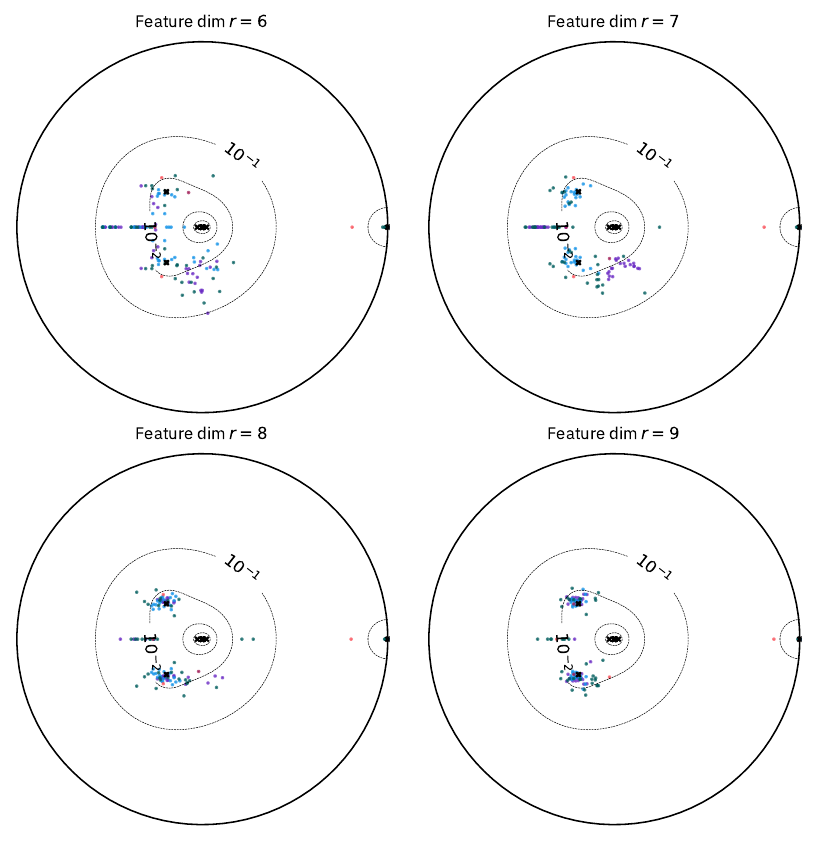}
    \caption{Estimation of the eigenvalues of the logistic map at different feature dimensions. For each model, we plot the eigenvalues estimated by 20 independent initial seeds. The color in each scatter plot follows the same color-coding of Fig.~\ref{fig:logistic_featuredim}. Black $\times$ represent reference eigenvalues, while the dashed contour lines show the pseudospectral regions when the estimation error ranges from $10^{-1}$ (outermost) to $10^{-4}$ (innermost). Note that for this problem pseudospectra indicates that the leading eigenvalue is easy to recover, while recovering eigenvalues close to zero is very hard.}
\label{fig:logistic_eigs}
\end{figure}

\begin{figure}[!ht]
  \centering
  \includegraphics[width=0.45\textwidth]{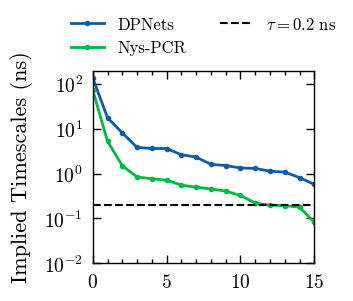}
    \caption{The implied timescale~\citep{mardt2018vampnets} associated to the eignvalues of the transfer operator. Everything which is below the simulation lagtime $\tau = 0.2$ ns, is just numerical noise.}
\label{fig:chignolin_timescales}
\end{figure}

\textbf{Hyperparameters~} Metric loss coefficient $\gamma=1$.

The reported results concern the average over 20 independent seed initializations.

\subsection{Fluid Flow past a cylinder}

\begin{figure}[!ht]
  \centering \includegraphics[width=0.99\textwidth]{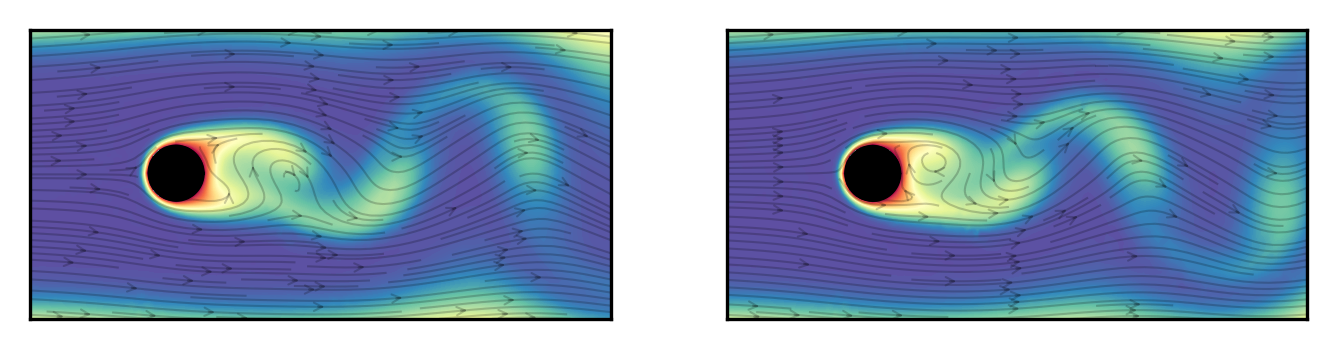}
    \caption{Two snapshots of the passive scalar concentration for the flow past a cylinder.}
\label{fig:fluid_dynamics}
\end{figure}
The data for this experiment are equally spaced sampled solutions of the Navier-Stokes equations~\citep{TrefethenEmbree2020} for an incompressible Newtonian fluid coupled with the transport equation $$\partial_{t}c + \boldsymbol{u}\cdot \nabla c = {\rm{Pec}}^{-1}\nabla^{2}c.$$ Here, ${\rm{Pec}}$ is the P\'eclet number, and $c: \R^3 \to \R$ is a field representing the concentration $c(t; x, y)$ of a scalar quantity which is transported by the fluid flow without influencing the fluid motion itself (e.g. a dye dissolved in water). These partial derivative equations are solved over a $100 \times 200$ regular grid. 

In this experiment we have only been able to train DPNets-relaxed. Indeed both DPNets (unrelaxed) and VAMPNets failed due to linear algebra errors arising in the back propagation step for the pseudo-inverse matrix.

\textbf{Data~} Available at \texttt{https://github.com/maziarraissi/HFM}. It consists of 201 snapshots: 160 used for training, the rest for testing. Has been standardized: each channel with its own mean and std. In Fig.~\ref{fig:fluid_dynamics} we show two snapshots from the training dataset.

\textbf{Optimization~}  Adam with learning rate tuned as explained in the general remarks. Full-batch training. 5000 total training iterations/epochs.
The training time of a full batch $\approx 39$ mins.

\textbf{Architecture~} MLP with layers of width \texttt{[128, 512, 1024, 512, 256, 64]} and ReLU activation function

\textbf{Hyperparameters~} Metric loss coefficient $\reg = 1$; 

\subsection{Continuous dynamics}
The implementation of this experiment is straightforward, and our results can be reproduced using the following informations. 

\textbf{Data~} Produced in-house with \text{JAX MD}. Both the dataset and the script to produce new trajectories will be released. The dataset consists of $10^5$ snapshots: $70\%$ used for training, $10\%$ for validation, $20\%$ for testing.

\textbf{Optimization~}
Adam with learning rate $10^{-3}$. Other parameters are the predefined values in \texttt{Optax}'s implementation. Batch size: 8192. 500 epochs. Training time is $\approx$ 2 mins.

\textbf{Architecture~} Multi layer perceptron with CeLu activation function. Dimension of the hidden layer: \texttt{[32, 64, 128, 128, 64, 4]}. 

\textbf{Hyperparameters~} Metric loss coefficient $\reg = 50$.

\subsection{Ordered MNIST}
\textbf{Data}~ out of the full MNIST dataset we generated a trajectory of 1000 steps as discussed in the main text, and evaluated the forecasting accuracy over 1000 different test initial conditions. In Tab. \ref{tab:time} we report the training time for DPNets, DPNets-relaxed and the baselines used. We observe that both our methods are the fastest during training. See also Fig. \ref{fig:digits} for the generated sequences of digits by the compared methods. \riccardo{could be nice to also have inference times}

\begin{table}[h]  %
    \centering    %
    \caption{Ordered MNIST Training times. Each model is run on CPU and we report mean $\pm$ std for 20 runs.}  %
    \label{tab:time}
    \begin{tabular}{r|c|c}
\hline
   \textbf{Model} &  \textbf{Fit Time (s)}  &  \textbf{Time per Epoch (s)}  \\
\hline
      DynamicalAE &    $0.571 \pm 0.034$    &       $0.057 \pm 0.003$       \\
  Oracle-Features &         $0.098$         &               -               \\
              DMD &         $0.333$         &               -               \\
  KernelDMD-Poly3 &         $4.914$         &               -               \\
 KernelDMD-AbsExp &         $0.832$         &               -               \\
    KernelDMD-RBF &         $0.776$         &               -               \\
           \textbf{DPNets} &    $0.046 \pm 0.004$    &       $0.049 \pm 0.001$       \\
\textbf{DPNets-relaxed} &    $0.043 \pm 0.004$    &       $0.051 \pm 0.004$       \\
\hline
\end{tabular}
\end{table}
\textbf{Optimization~} Adam with learning rate tuned as explained in the general remarks. Batches of 128 samples trained over 150 epochs.

\textbf{Architecture~} \texttt{Conv2d[16]$\,\to\,$ReLU$\,\to\,$MaxPool[2]$\,\to\,$ Conv2d[32]$\,\to\,$ReLU$\,\to\,$MaxPool[2]$\,\to\,$Linear[5]}.

For the Auto-Encoder baselines we used this architecture for the encoder and the ``reversed'' network constructed with transposed convolutions for the decoder.

\textbf{Hyperparameters~} Metric loss coefficient $\reg = 1$.

\begin{figure}[!ht]
  \centering \includegraphics[width=0.99\textwidth]{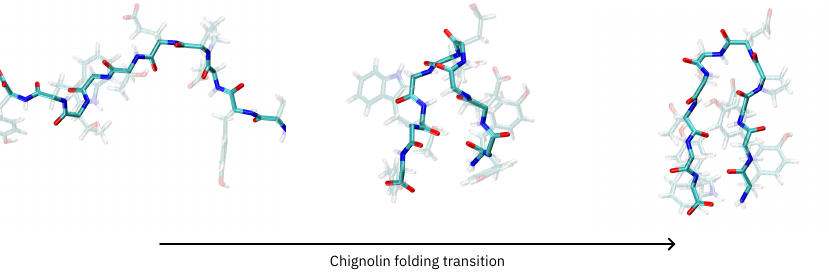}
    \caption{Three snapshots of Chignolin while undergoing a folding transition.}
\label{fig:chignolin_transition}
\end{figure}

\subsection{The metastable states of Chignolin}

In this experiment we build from the work in~\citep{Ghorbani2022} by employing our framework to learn the leading eigenfunctions associated to the dynamics of Chignolin, a folding protein, from a $106 \, \mu{\rm s}$ long molecular dynamics simulation sampled every $200 \, {\rm ps}$, totalling over half a million data points~\citep{LindorffLarsen2011}. We consider every heavy atom for a total of 93 nodes as well as a cutoff radius of 6 Angstroms giving an average of 30 neighbours for each atom. Compared to~\citep{Ghorbani2022}, which selects only the $\approx 20$ $C_{\alpha}$ atoms each with its first 5 neighbours, our experiment has therefore a much larger scale. Indeed,~\citep{Ghorbani2022} reports being able to train directly with the objective $\score^{0}$, while in our case we always encountered numerical errors, and we were able to only succesfully train the $\rscore^{\reg}$ objective. We parametrized the feature map with a graph neural network (GNN) model. GNNs currently are the state of the art in modeling atomistic systems \citep{Chanussot2021}, and allow one to elegantly incorporate the roto-translational and permutational symmetries prescribed by physics. Specifically, we train a SchNet~\citep{schutt2019schnetpack, schutt2023schnetpack} model with 3 interaction blocks, where in each block the latent atomic environment is 64-dimensional and the inter-atomic distances used for the message-passing step are expanded over 20 radial basis functions. After the last interaction block, each latent atomic environment is forwarded to a linear layer and then aggregated via averaging. %
The model has been trained for 100 epochs with an Adam optimizer and a learning rate of $10^{-3}$. %
We analyzed the eigenfunctions using the technique described in \citep{Novelli2022}, which links each metastable state to physically interpretable conformational descriptors. Our analysis aligns perfectly with \citep{Novelli2022}, where the slowest metastable state corresponds to the folding-unfolding transition and is linked to the distance between residues (1, 10) and (2, 9) located at opposite ends of the protein. Additionally, the immediately faster metastable state represents a conformational change within the folded state, characterized by the relative angle between residues 6 and 8. 

In Fig.~\ref{fig:chignolin_transition} we plot how the structure of Chignolin changes while performing a folding transition, while in Fig.~\ref{fig:chignolin_timescales} we plot the implied timescales of the dynamical modes of Chignolin as estimated by DPNets-relaxed and Nystr\"om-PCR.

This is by far the heaviest experiment of the paper, and we made use of the package \texttt{SchNetPack}~\citep{schutt2019schnetpack, schutt2023schnetpack} on multiple instances. In particular, we have used \texttt{SchNetPack} dataloaders and preprocessing transformations (casting to 32 bit precision and on-the fly computation of the distance matrix). Further, we have used \texttt{SchNetPack}'s implementation of the SchNet interaction block. To reproduce our results, the following informations may prove useful.

\textbf{Data~} The data was presented for the first time in~\citep{LindorffLarsen2011} and is freely available for non-commercial use upon request to DeShaw research.  Dataset of 524743 snapshots. Each graph is composed by the 93 heavy atoms. The edges are formed only if two atoms are less than 5 Angstroms distant. The average number of edges is 28.

\textbf{Optimization~} Adam with learning rate $10^{-3}$. Other parameters are the predefined values in \texttt{Torch}'s implementation.
Batch size: 192. 100 epochs. Training time: $\approx 11$ hrs.

\textbf{Architecture~} SchNet with 3 blocks, 20 RBF functions expansions, 64 latent dimension. At the output of SchNet, the hidden variables associated to the nodes are averaged and forwarded to a dense layer with 16 final output features. 

\textbf{Hyperparameters~} Metric loss coefficient $\reg = 0.01$. For the Nystr\"om baseline we used $M = 5000$ inducing points.

\begin{sidewaysfigure}[ht]
    \includegraphics[width=\textwidth]{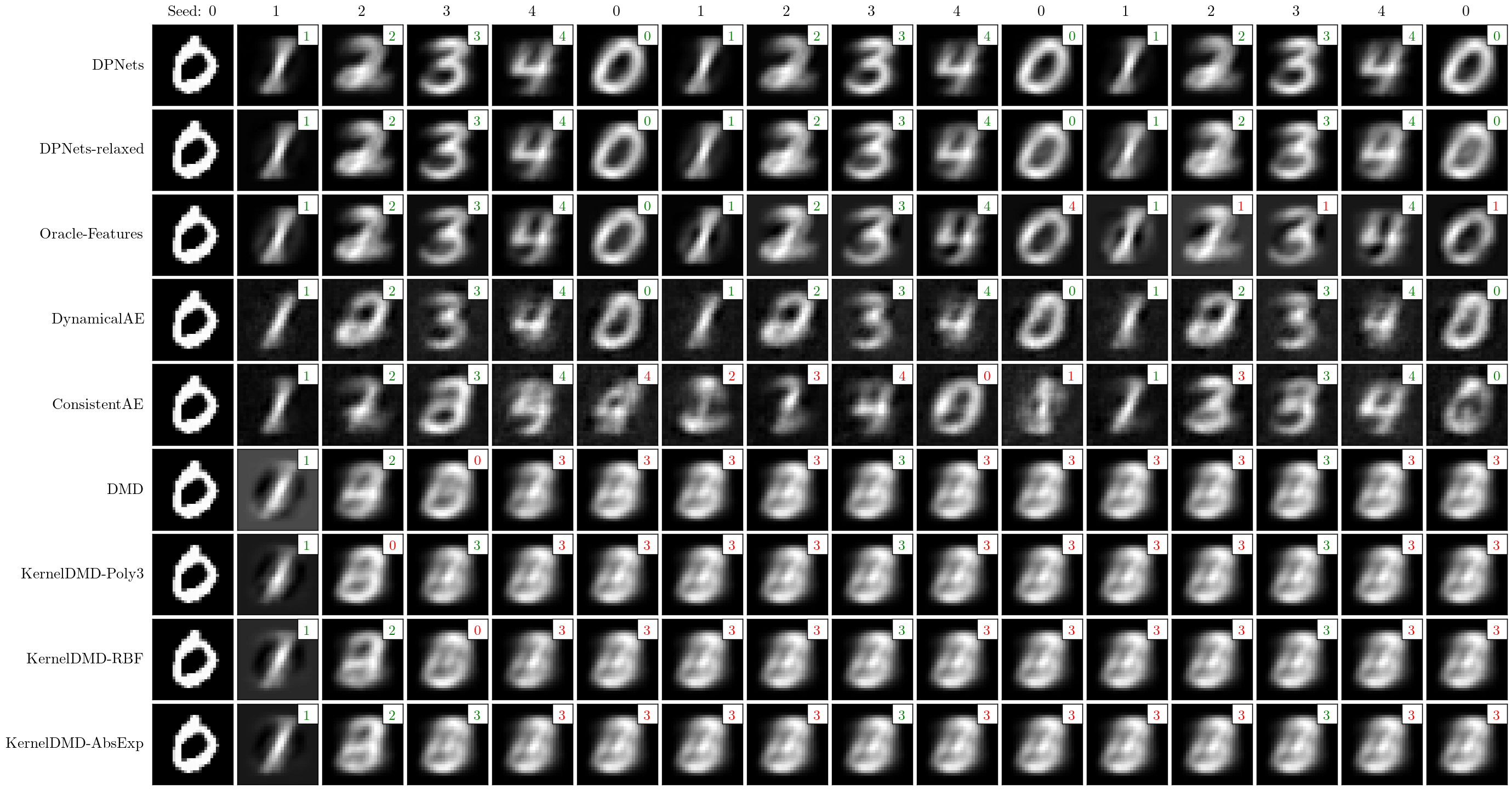}
    \caption{Sample of the states predicted by the different models starting from a seed image of the digit 0. The green label in the upper right corner of each image indicates proper classification, while the red one means miss-classification.}
    \label{fig:digits}
\end{sidewaysfigure}
\end{document}